\documentclass{article} 
\usepackage{iclr2025_conference,times}

\usepackage{amsmath,amsfonts,bm}

\def\eqref#1{equation~\ref{#1}}

\def\1{\bm{1}}

\DeclareMathAlphabet{\mathsfit}{\encodingdefault}{\sfdefault}{m}{sl}
\SetMathAlphabet{\mathsfit}{bold}{\encodingdefault}{\sfdefault}{bx}{n}

\usepackage{hyperref}
\usepackage{url}
\usepackage{algorithm}
\usepackage{algorithmic}
\usepackage{amsmath}
\usepackage{amsfonts}
\usepackage{amssymb}
\usepackage{amsthm}
\usepackage{multirow}
\usepackage{multicol}
\usepackage{subfiles}
\newtheorem{theorem}{Theorem}
\newtheorem{lemma}{Lemma}
\usepackage{tcolorbox}
\usepackage{xcolor}
\usepackage{float}
\usepackage{booktabs}

\title{Beyond Success and Failure: Length-Aware Contrastive Learning for GUI Agents}

\author{Chengyang Gu\dag \& Hui Xiong\thanks{Correspondence to: \texttt{xionghui@hkust-gz.edu.cn}
} \\
Artificial Intelligence Thrust, Information Hub\\
The Hong Kong University of Science and Technology (Guangzhou),  Guangzhou, China \\
\dag \texttt{cgu893@connect.hkust-gz.edu.cn}
\And
Le Zhang, Jingbo Zhou, Yu Shi, Siqi Bao, Zheng-Fan Wu \& Hua Wu \\
Baidu Inc. Beijing, China\\
\\
\And
Yize Chen \\
Department of Electrical and Computer Engineering\\
University of Alberta, Edmonton, Alberta, Canada\\
}

\iclrfinalcopy 
\begin{document}

\maketitle

\begin{abstract}
Graphical User Interface (GUI) agents powered by Multimodal Large Language Models (MLLMs) have shown strong potential for automating tasks across diverse digital environments, where reinforcement learning (RL) has become a dominant training paradigm. However, widely used methods such as Group Relative Policy Optimization (GRPO) suffer from reward–gradient misalignment, leading to inefficient and unstable optimization. Recent work addresses this issue by reformulating RL with verifiable rewards (RLVR) as contrastive or classification-based objectives, which improve stability by eliminating problematic gradient behaviors. Despite this progress, existing contrastive RLVR methods rely primarily on outcome-level supervision and fail to capture fine-grained differences in trajectory quality within the same outcome category. In this paper, we propose Length-Aware Contrastive Learning for GUI Agents (LACL-GUI), a contrastive RLVR framework that incorporates trajectory-level quality signals into policy optimization. LACL-GUI introduces structured preferences within both successful and failed trajectories, encouraging concise successful executions and differentiating failure quality based on divergence from successful trajectories, while preserving optimization stability. Experiments on GUI agent benchmarks show that LACL-GUI provides more effective learning signals and consistently improves agent performance over prior methods, highlighting the value of trajectory-level supervision in contrastive RLVR.
\end{abstract}

\section{Introduction}

Graphical User Interface (GUI) agents powered by Multimodal Large Language Models (MLLMs) \citep{bai2025qwen3, wang2025ui, zhang2024android} have demonstrated remarkable potential in automating tasks across web, mobile, and desktop platforms. Prior work typically relies on supervised fine-tuning (SFT) \citep{xu2024aguvis, qin2025ui}, which demands substantial high-quality synthetic data and often struggles to generalize to unseen interfaces \citep{luo2025gui, lu2026ui}. Unlike SFT, reinforcement learning (RL) enables agents to learn from trial-and-error interaction with environmental feedback, continuously refining policies without exhaustive labeled demonstrations \citep{zhou2026gui, hu2026gui, yuan2026se}. As a result, RL has emerged as a dominant paradigm for training advanced GUI agents \citep{lu2025ui, hu2026gui}.

Among RL-based approaches, Group Relative Policy Optimization (GRPO) is one of the dominant methods due to its strong empirical performance and computational efficiency \citep{shao2024deepseekmath, guo2025deepseek}. By leveraging group-wise reward normalization, GRPO eliminates the need for a separate critic network while maintaining stable policy optimization. However, recent studies reveal a misalignment between reward signals and actual gradient allocation, where optimization pressure is disproportionately influenced by policy likelihood rather than trajectory quality \citep{deng2026group, zhang2026revisiting}. \citep{zhai2026rewards} theoretically identify two such gradient pathologies in GRPO: Gradient Misassignment in Positives, where high-probability tokens receive disproportionately large updates while low-probability tokens receive weak gradients despite belonging to successful trajectories; and Gradient Domination in Negatives, where gradient magnitudes concentrate excessively on a small number of high-probability negative tokens, allowing them to dominate optimization. Consequently, verifiable rewards are not faithfully reflected in policy updates, limiting both optimization efficiency and training stability.

To address reward-gradient mismatch in GRPO-style optimization, recent work has increasingly reformulated RLVR as contrastive or classification-based objectives rather than scalar reward-weighted optimization~\citep{oord2018representation}. Representative examples include Rewards as Labels (REAL) \citep{zhai2026rewards}, CLIPO \citep{cui2026clipo}, and Harmony \citep{yu2026harmony}, which improve optimization stability, robustness, and learning effectiveness through comparison-based objectives, with REAL in particular showing that classification-based optimization induces monotonic and bounded gradient weighting that mitigates the pathologies of GRPO. However, this line of work still reduces every trajectory to a single binary label, leaving the resulting classification-based objectives blind to any structure within positive or negative groups themselves.

Despite these optimization advantages, existing contrastive RLVR methods still derive supervision primarily from outcome-level labels. Consequently, trajectories sharing the same outcome are often treated equivalently, despite potentially large differences in execution quality, efficiency, and proximity to success \citep{yu2026outcome,zhan2026graphpographbasedpolicyoptimization,stepbranpo,zheng2026adaptive}. This coarse-grained supervision limits RLVR methods' capabilities in capturing fine-grained trajectory preferences~\citep{zhang2026revisiting}. 
This limitation is particularly pronounced for GUI agents, where successful trajectories may vary significantly in efficiency, while failed trajectories can differ substantially in how close they come to completing the task. The central challenge is therefore how to incorporate trajectory-level quality indicators into contrastive RLVR, while preserving reliable policy optimization.

To address this challenge, we propose \textit{\textbf{L}ength-\textbf{A}ware \textbf{C}ontrastive \textbf{L}earning for \textbf{GUI} Agents} \textbf{(LACL-GUI)}, a length-aware contrastive RLVR framework for GUI agent training. LACL-GUI work on RLVR-based objectives and introduces structured preferences within both successful and failed trajectory groups: for successful trajectories, we encourage shorter solutions by assigning stronger optimization pressure to concise executions while down-weighting redundant ones; for failed trajectories, we introduce divergence length, which measures how long a failed trajectory remains aligned with a reference successful execution before deviating, treating later-diverging failures as more informative negative samples and applying weaker suppression. These preferences are incorporated through outcome-preserving, zero-mean logit modulations, so that optimization is refined within each outcome category without altering the underlying success–failure decision boundary. As a result, LACL-GUI successfully provides fine-grained efficiency and failure-quality guidance on top of classification-based RLVR, while preserving its optimization stability and gradient safety. We evaluate LACL-GUI on OSWorld benchmark \citep{xie2024osworld} using Qwen3-VL-Thinking (4B and 8B)  as backbone models. Experiments show LACL-GUI consistently outperforms baselines at both scales, improving task success rate by 2.8\% over GRPO and 2.7\% over REAL on 8B backbone within 50 training iterations. 

\section{Related Works}

\subsection{GUI Agents}
GUI Agents aim to execute tasks across web, mobile, and desktop environments by mapping high-level instructions to grounded interface actions. Early approaches rely on structured representations, such as HTML \citep{deng2023mind2web} and DOM trees \citep{wen2024autodroid}. However, these methods depend heavily on the coverage and quality of such representations, which are often brittle and inconsistent across platforms \citep{wu2026gui, tang2026lpo}. Recent approaches leverage Multimodal Large Language Models (MLLMs) to enable more general and visually grounded interactions. Given natural language instructions and interface observations, MLLM-based GUI agents such as UI-TARS \citep{qin2025ui}, OS-Atlas \citep{wu2025atlas}, and UGround \citep{gou2025navigating} parse model responses into executable actions through predefined prompting and action schemas. In this work, we follow the MLLM-based paradigm and adopt the Qwen3-VL series model \citep{bai2025qwen3} with its action parser as the backbone for policy optimization.

\begin{figure*}[t]
    \centering
    \includegraphics[width=0.99\textwidth]{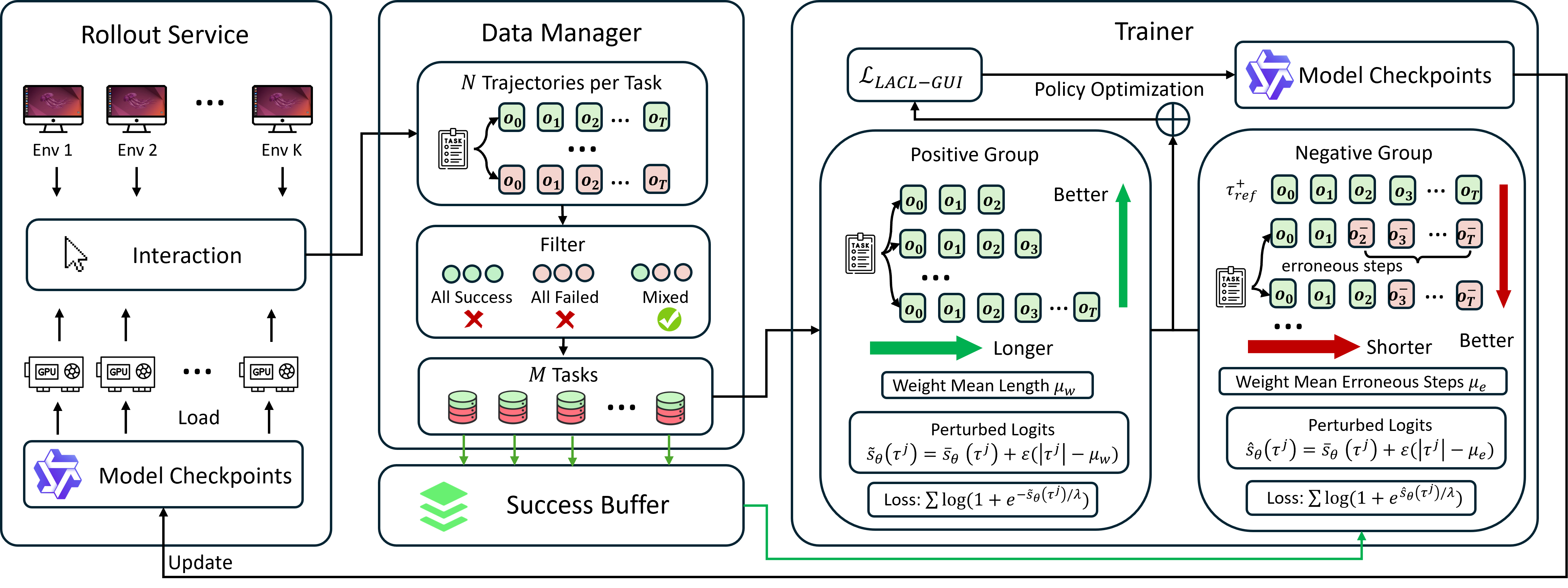}
    \caption{
    Overview of LACL-GUI framework. A decoupled architecture collects informative trajectory groups and maintains a \textit{Success Buffer} for reference trajectories. The proposed objective introduces length-aware positive preference and divergence-aware negative refinement on top of REAL contrastive objective.
    }
    \label{fig:lacl_framework}
\end{figure*}

\subsection{Reinforcement Learning}
Reinforcement learning (RL) has become a promising paradigm for enhancing the reasoning and decision-making capabilities of LLM and MLLM-based agents \citep{guo2025deepseek, jaech2024openai, team2025kimi}, particularly in multi-turn interactive environments. Early approaches largely rely on reinforcement learning from human feedback (RLHF), including methods such as Direct Preference Optimization (DPO) \citep{rafailov2023direct}, which derive reward signals from offline human preference data. However, RLHF suffers from high annotation costs and noisy reward estimation \citep{kaufmann2023survey}. Recent work therefore shifts toward Reinforcement Learning with Verifiable Rewards (RLVR), where rewards are directly obtained from task outcomes. Group Relative Policy Optimization (GRPO) \citep{shao2024deepseekmath} is a representative RLVR approach that removes the need for a critic model through group-wise advantage estimation. Subsequent variants, including Dr.GRPO \citep{liu2025understanding}, DAPO \citep{yu2026dapo}, and GSPO \citep{zheng2025group}, further improve training stability and efficiency. Nevertheless, recent studies such as REAL \citep{zhai2026rewards} reveal that GRPO-style objectives suffer from gradient misalignment, where policy updates can be dominated by token likelihood rather than trajectory quality. These findings motivate exploration of alternative RLVR objectives beyond conventional reward-weighted operations.

Recent works also explore finer-grained credit assignment for GUI agents beyond sparse outcome rewards. GiGPO \citep{feng2026group} introduces milestone-based rewards through manually designed environment-specific rules, while other approaches estimate intermediate progress using pretrained representation models \citep{zheng2026adaptive, wang2026rlanything}. However, these approaches either require task-specific supervision or additional models, limiting their scalability across diverse GUI environments. In contrast, our approach derives trajectory-level quality signals directly from successful and failed rollouts within a unified contrastive optimization framework.

\subsection{Contrastive Learning for RLVR}
Contrastive learning is a widely used representation learning paradigm that encourages positive samples to be closer while separating negative samples \citep{sammani2023visualizing}. Since RLVR methods typically rely on binary outcome rewards \citep{parthasarathi2025grpo}, recent studies show that objectives such as GRPO can be interpreted from a classification perspective \citep{mroueh2025reinforcement}. This perspective has inspired methods that replace reward-weighted optimization with discriminative or contrastive objectives. REAL \citep{zhai2026rewards} formulates RLVR as a binary classification problem using BCE loss, providing a stable optimization alternative by contrasting successful and unsuccessful trajectories. Subsequent works, including CLIPO \citep{cui2026clipo} and ConSPO \citep{zhang2026revisiting}, further introduce explicit contrastive objectives based on InfoNCE \citep{oord2018representation}. However, existing approaches mainly operate at the trajectory outcome level, where trajectories within the same outcome category receive identical supervision. 
Our LACL-GUI addresses this limitation by introducing trajectory-level quality preferences into contrastive RLVR. 

\section{Methodology}

\subsection{Problem Formulation}
We formulate GUI task completion as a sequential decision-making problem under a Partially Observable Markov Decision Process (POMDP). Given a natural language instruction $q$, an agent interacts with the environment over multiple steps to complete the task. At each timestep $t$, the agent receives a visual observation $o_t \in \mathcal{O}$, represented as a screenshot of the interface, instead of accessing the full underlying state. To mitigate partial observability, the agent conditions on a finite history window 
$h_t = \{(o_{t-k}, z_{t-k}, a_{t-k}), \dots, (o_{t-1}, z_{t-1}, a_{t-1})\}$ 
of the most recent $k$ interactions, where $z_t$ denotes an intermediate reasoning trace and $a_t \in \mathcal{A}$ denotes a structured executable action (e.g., parameterized tool calls such as clicking at specific coordinates). A policy $\pi_\theta$, parameterized by a vision-language model, generates a reasoning-action pair:
\[
(z_t, a_t) \sim \pi_\theta(\cdot \mid o_t, h_t, q).
\]
In practice, $z_t$ serves as auxiliary reasoning tokens, while $a_t$ is the executable action applied to the environment. Executing action $a_t$ leads to the next observation $o_{t+1}$ according to the environment dynamics. An episode produces a trajectory
\[
\tau = \{(o_0, z_0, a_0), (o_1, z_1, a_1), \dots, (o_T, z_T, a_T)\},
\]
which terminates either when the task is completed or when a maximum horizon is reached.

We consider a sparse outcome-based reward setting, where supervision is provided only at the trajectory level:$
R(\tau) \in \{0,1\},
$ indicating task success or failure. All steps within a trajectory share the same outcome-level supervision, without any intermediate rewards for individual actions. The objective is to learn a policy $\pi_\theta$ that maximizes the expected success rate $
\theta^* = \arg\max_\theta \mathbb{E}_{\tau \sim \pi_\theta}[R(\tau)].$

\subsection{Architecture}


Following DART~\citep{li2025efficient}, we adopt an asynchronous training architecture for proposed LACL-GUI framework. The system consists of four modules: \textit{Rollout Service}, \textit{Data Manager}, \textit{Success Buffer}, and \textit{Trainer}. The \textit{Rollout Service} collects trajectories from multiple GUI environments in parallel, which are filtered and organized by the \textit{Data Manager}. Successful trajectories are stored in the \textit{Success Buffer} for reuse and divergence estimation, while filtered trajectory groups are sent to the \textit{Trainer} for policy optimization with the LACL-GUI objective. Updated parameters are periodically synchronized back to the rollout workers, forming an asynchronous training loop.

\paragraph{Rollout Service} 

Interacting with real-world environments to solve GUI tasks using vision-language models is time-consuming. To address this, we employ asynchronous rollout-wise sampling following DART~\citep{li2025efficient}, where each environment independently starts a new trajectory after completion without waiting for other workers. This design improves environment utilization and enables efficient trajectory collection. For each task instruction, we collect $N$ trajectories under the same initial condition, forming trajectory groups for contrastive optimization.

\paragraph{Data Manager}

The \textit{Data Manager} filters sampled trajectories to construct informative training groups. 
Since LACL-GUI relies on contrastive optimization between successful and failed trajectories, groups 
where all $N$ trajectories share the same outcome (i.e., all success or all failure) provide no 
discriminative supervision and are discarded. For each training iteration, the \textit{Data Manager} 
collects a batch of $M$ tasks whose trajectory groups contain both successful and failed samples, and 
then forwards them to the \textit{Trainer}. Following DART, policy updates are performed after all 
trajectories in the batch are collected, enabling meaningful within-task comparisons among sampled 
trajectories.

\paragraph{Success Buffer} 

Sparse rewards in GUI tasks often lead to insufficient successful trajectories during training. To 
improve sample efficiency, we maintain a \textit{Success Buffer} that stores successful trajectories 
for each task. The buffer serves two purposes: first, it provides additional positive samples when 
current rollouts contain limited successes; second, it provides reference successful trajectories for 
divergence-aware failure analysis. Specifically, for each failed trajectory ($R(\tau)=0$), we estimate its quality by comparing it with a successful reference trajectory from the same task. 
The reference is selected as the shortest successful trajectory from the union of the \textit{Success Buffer} and current-batch successful trajectories, while only current-batch successful trajectories are considered when the buffer is unavailable.

\paragraph{Trainer}

The \textit{Trainer} is decoupled from rollout workers and asynchronously optimizes the policy using filtered trajectory groups to eliminate blocking between data collection and policy optimization. After each update, the latest parameters are synchronized back to the rollout service, allowing trajectory collection and policy optimization to proceed concurrently.

\quad

Overall, this architecture provides an efficient infrastructure for collecting structured trajectory groups and maintaining high-quality successful references, which are essential for the trajectory-level contrastive optimization in LACL-GUI. We next introduce our proposed objective in detail.

\subsection{LACL-GUI Objective}

\subsubsection{Revisiting REAL Objective}
We first revisit the REAL \citep{zhai2026rewards} objective, which formulates RLVR from a classification perspective. 
Given an instruction $q$, a set of $N$ trajectories $\{\tau^1, \dots, \tau^N\}$ are sampled from the same policy. 
Each trajectory $\tau^j$ is assigned a binary reward $r^j \in \{0,1\}$ by a verifier indicating task success. 
Based on these labels, trajectories are partitioned into two disjoint sets:
\begin{equation*}
    \mathcal{T}^+ = \{\tau^j|r^j = 1\}, \quad \mathcal{T}^- = \{\tau^j|r^j = 0\}.
\end{equation*}

For each trajectory $\tau^j$ in both $\mathcal{T}^+$ and $\mathcal{T}^-$, REAL defines a length-normalized relative log probability score over its actions as \emph{logits}:
\begin{equation*}
    \bar{s}^j  = \frac{1}{|\tau^j|} \sum_{t=1}^{|\tau^j|} (\log{\frac{\pi_{\theta}(a_t|o_t, h_t,q) }{\pi_{old}(a_t|o_t, h_t,q)}}).
\end{equation*}
where $|\tau^j|$ denotes trajectory length, and $a_t$, $o_t$, and $h_t$ represent the action, observation, and history at step $t$, respectively. The score measures the relative change of trajectory probability under the updated policy compared to the old policy. Specifically, $\bar{s}^j>0$ indicates that the trajectory is promoted, whereas $\bar{s}^j<0$ indicates suppression. The binary reward determines the optimization direction by separating desirable and undesirable trajectories, while the logits reflect the magnitude of optimization updates.

Based on logits $\bar{s}^j$, REAL optimizes the following contrastive-style objective:
\begin{equation}
    \mathcal{L}_{REAL} = \log (1+\sum_{\mathcal{T^+}}e^{-\bar{s}^j/\lambda}) + \log  (1+\sum_{\mathcal{T^-}}e^{\bar{s}^j/\lambda}).\
    \label{equ::REALLoss}
\end{equation}
where $\lambda$ is a temperature parameter. This objective encourages positive trajectories and suppresses negative ones, with adaptive gradient allocation determined by $\bar{s}^j$. Specifically, under-optimized positive trajectories with smaller logits receive stronger updates, while over-confident positives are down-weighted; similarly, negative trajectories with larger logits receive stronger penalties. Such adaptive and bounded gradient allocation alleviates the gradient misalignment issues observed in prior RLVR methods \citep{zhai2026rewards}.

However, Eq.~\eqref{equ::REALLoss} only distinguishes trajectories at the task-outcome level and does not capture fine-grained differences within $\mathcal{T}^+$ or $\mathcal{T}^-$. 
Trajectories with identical outcomes can still exhibit substantially different qualities. For example, two successful GUI trajectories may complete the same task, while one requires only necessary interactions and the other contains redundant exploratory actions. Similarly, among failed trajectories, one may follow the correct procedure for most steps before making a final mistake, whereas another may diverge immediately from the desired behavior. Although these trajectories receive identical binary rewards, they provide different optimization signals. 
Since REAL does not explicitly model such intra-group preferences, it cannot encourage efficient successful executions or distinguish informative failures from fundamentally incorrect attempts. This limitation motivates our proposed LACL-GUI objective.


\subsubsection{Positive Group Logits Perturbation}

For fine-grained comparison within successful trajectories, we observe that GUI task execution naturally exhibits an efficiency preference. Among trajectories starting from the same initial state and achieving the same goal, shorter executions are generally preferred, as longer successful trajectories often contain redundant interactions caused by unnecessary exploration, incorrect actions followed by recovery, or repeated operations. 
For example, in GUI environments, an agent may perform unnecessary exploratory movements, repeatedly click irrelevant regions, or revisit previously completed steps before eventually reaching the correct state. Such inefficiency is particularly undesirable in GUI environments, where each interaction introduces additional inference and execution cost while increasing the possibility of future errors.

Based on this observation, we introduce a length-aware perturbation within the positive group in the logit space. 
Rather than modifying binary success supervision, the proposed perturbation preserves REAL's success preference while introducing additional efficiency-aware guidance among successful trajectories.

For the positive set $\mathcal{T}^+$, we define the softmax weighting based on logits $\bar{s}_{\theta}^j$ under current policy $\pi_{\theta}$:
\begin{equation}
p^+(\tau^j) = \frac{e^{-\bar{s}_\theta(\tau^j)/\lambda}}{\sum_{\tau^j \in \mathcal{T}^+} e^{-\bar{s}_\theta(\tau^j)/\lambda}}.
\end{equation}

We then compute the weighted mean trajectory length:
\begin{equation}
\mu_w = \sum_{\tau^j \in \mathcal{T}^+} p^+(\tau^j)\, |\tau^j|.
\end{equation}

Based on this efficiency preference, we perturb the positive logits as:
\begin{equation}
\tilde{s}_\theta(\tau^j) = \bar{s}_\theta(\tau^j) + \epsilon \left(|\tau^j| - \mu_w \right), \quad \tau^j \in \mathcal{T}^+.
\end{equation}
where $\epsilon>0$ is a small perturbation coefficient controlling the magnitude of the logit adjustment. This perturbation introduces a relative efficiency preference within the positive group. Under the exponential weighting $p(\tau^j)\propto\exp(-\tilde{s}_\theta(\tau^j)/\lambda)$, shorter successful trajectories receive relatively larger optimization weights, whereas longer trajectories are down-weighted. Therefore, LACL-GUI encourages efficient successful executions without changing the original success preference defined by REAL.

\subsubsection{Negative Group Logits Perturbation}
While REAL suppresses all failed trajectories uniformly, failures may contain different levels of learning value. 
In GUI tasks, a trajectory that follows successful behaviors for most steps before deviating is generally closer to task completion than one that fails at an early stage. We therefore introduce a \emph{divergence-aware perturbation} to distinguish failed trajectories according to their proximity to successful executions.

For each task, given a negative trajectory $\tau^j\in\mathcal{T}^{-}$, we compare it with a reference successful trajectory $\tau_{ref}^{+}$:
\begin{equation}
    \begin{aligned}
        \tau^j &= \{ (o_1^j, a_1^j), \dots , (o_{|\tau^j|}^j, a_{|\tau^j|}^j) \}; \\
        \tau^+_{ref} &= \{ (o_1^{ref}, a_1^{ref}), \dots , (o_{|\tau_{ref}^+|}^{ref}, a_{|\tau^+_{ref}|}^{ref}) \}.
    \end{aligned}
\end{equation}
The reference trajectory $\tau_{ref}^{+}$ is selected as the shortest successful trajectory among the success buffer and the current rollout group. Since trajectories of the same task share an identical initial state, their observations can be compared step by step to identify the first divergence point.

We compute the observation similarity 
$\mathrm{SIM}(o_t^{ref},o_t^j)$ 
at each step using Structural Similarity Index (SSIM) \citep{brunet2012mathematical}, following \citep{lin2026ui}. 
Given a threshold $\Theta$, the divergence length is defined as:
\begin{equation}
    d^j = \min \{t|\mathrm{SIM}(o_t^{ref}, o_t^j) < \Theta\} - 1.
\end{equation}
If no divergence is detected, we set $d^j=|\tau^j|$.

Based on the divergence length, we define the number of \emph{erroneous steps}
\begin{equation}
    E^j = |\tau^j| - d^j + 1.
\end{equation}
which measures the remaining execution length after deviating from the successful trajectory.

Analogous to the positive group, we compute the weighted average erroneous steps:
\begin{equation}
\begin{aligned}
    p^-(\tau^j) &= \frac{e^{\bar{s}_\theta(\tau^j)/\lambda}}{\sum_{\tau^j \in \mathcal{T}^-} e^{\bar{s}_\theta(\tau^j)/\lambda}};
\\
\mu_e &= \sum_{\tau^j \in \mathcal{T}^-} p^-(\tau^j)\, |E^j|.
\end{aligned}
\end{equation}

The perturbed logits are then given by
\begin{equation}
    \hat{s}_\theta(\tau^j) = \bar{s}_\theta(\tau^j) + \epsilon \left(E^j - \mu_e \right), \quad \tau^j \in \mathcal{T}^-.
\end{equation}

This perturbation introduces a \emph{relative error-severity preference} within the negative group. Failures with larger erroneous-step counts, corresponding to earlier divergence from successful executions, receive larger logits and therefore stronger suppression under the negative loss. In contrast, near-success trajectories with fewer post-divergence steps are penalized more gently. Compared with the uniform negative suppression in REAL, LACL-GUI preserves informative near-success failures while still reducing the influence of fundamentally incorrect execution patterns.



\begin{table*}[t]
\centering
\scriptsize
\setlength{\tabcolsep}{4pt}

\begin{tabular}{l c ccccccccc}
\toprule
\textbf{Model} & 
\textbf{Max Steps} & 
\textbf{gimp} & 
\textbf{calc} & 
\textbf{impress} & 
\textbf{writer} & 
\textbf{os} & 
\textbf{thunderbird} & 
\textbf{vlc} & 
\textbf{vs\_code} & 
\textbf{Overall} \\

\midrule

\multicolumn{11}{c}{\emph{Closed-source Models}}\\
\midrule

OpenAI CUA o3 \citep{openai2025o3o4mini}
& 100 & 38.5 & 10.6 & 10.6 & 30.4 & 62.5 & 26.7 & 39.2 & 39.1 & 27.8 \\

OpenAI CUA \citep{openai2025o3o4mini}
& 50 & 34.6 & 14.9 & 29.7 & 26.1 & 70.8 & 66.7 & 11.6 & 69.6 & 36.5 \\

Claude 3.7 Sonnet \citep{anthropic2025claude37}
& 50 & 38.5 & 31.9 & 36.1 & 43.5 & 50.0 & 53.3 & 23.5 & 56.5 & 40.1 \\

Seed1.5-VL-250717 \citep{guo2025seed1}
& 100 & 50.0 & 34.8 & 48.9 & 56.5 & 39.1 & 73.3 & 35.3 & 56.5 & 45.1 \\

Claude 4 Sonnet \citep{anthropic2025claude4}
& 50 & 50.0 & 31.9 & 46.7 & 60.9 & 45.8 & 73.3 & 41.2 & 60.9 & 47.3 \\

DeepMiner-Mano-7B \citep{fu2025mano}
& 100 & 69.2 & 27.7 & 42.5 & 56.5 & 50.0 & 73.3 & 35.3 & 78.3 & 47.9 \\

\midrule
\multicolumn{11}{c}{\emph{Open-source Models}}\\
\midrule

OpenCUA-7B \citep{wang2026opencua}
& 50 & 43.6 & 13.2 & 32.6 & 33.3 & 43.5 & 42.2 & 28.3 & 47.1 & 32.9 \\

UI-TARS-1.5-7B \citep{qin2025ui}
& 100 & 51.9 & 9.6 & 38.2 & 39.1 & 31.3 & 40.0 & 22.4 & 47.8 & 33.0 \\

UI-TARS-72B-dpo \citep{qin2025ui}
& 100 & 73.1 & 6.4 & 23.8 & 34.8 & 37.5 & 60.0 & 17.7 & 52.2 & 33.4 \\

OpenCUA-32B \citep{wang2026opencua}
& 100 & 66.7 & 18.4 & 37.6 & 36.2 & 55.1 & 46.7 & 33.3 & 63.3 & 41.6 \\

DART-GUI-7B-0924 \citep{li2025efficient}
& 30 & 80.8 & 14.9 & 44.7 & 47.8 & 54.2 & 66.7 & 27.2 & 69.6 & 46.7 \\

\midrule
\multicolumn{11}{c}{\emph{Baselines}}\\
\midrule

Qwen3-VL-4B-Thinking \citep{bai2025qwen3}
& 50 & 69.2 & 25.3 & 19.1 & 43.5 & 56.5 & 45.8 & 60.0 & 56.3 & 41.2 \\

Qwen3-VL-4B-GRPO
& 50 & 73.1 & 10.6 & 31.7 & 56.5 & 54.2 & 66.7 & 46.1 & 47.8 & 42.2 \\

Qwen3-VL-4B-REAL
& 50 & 76.9 & 17.0 & 27.5 & 52.2 & 50.0 & 66.7 & 66.4 & 47.8 & 43.0 \\

Qwen3-VL-8B-Thinking \citep{bai2025qwen3}
& 50 & 76.9 & 12.8 & 29.6 & 52.2 & 62.5 & 60.0 & 23.5 & 65.2 & 42.7 \\

Qwen3-VL-8B-GRPO
& 50 & 65.4 & 25.5 & 33.8 & 56.5 & 66.7 & 73.3 & 34.6 & 60.9 & 47.2 \\

Qwen3-VL-8B-REAL
& 50 & 76.9 & 21.3 & 33.8 & 69.6 & 66.7 & 60.0 & 41.2 & 47.8 & 47.3 \\

\midrule
\multicolumn{11}{c}{\emph{Proposed}}\\
\midrule

\textbf{LACL-GUI-4B}
& 50 & 69.2 & 21.3 & 31.7 & 52.2 & 58.3 & 66.7 & 41.2 & 56.5 & 44.5 \\

\textbf{LACL-GUI-8B}
& 50 & 76.9 & 27.7 & 36.0 & 65.2 & 70.8 & 66.7 & 47.1 & 47.8 & 50.0 \\

\bottomrule
\end{tabular}

\caption{Task success rate (\%) on the OSWorld benchmark across different application domains. Results are reported following the official OSWorld evaluation protocol. Qwen3-VL-GRPO and Qwen3-VL-REAL denote models trained with GRPO and REAL objectives, respectively, based on the corresponding Qwen3-VL backbones.}
\label{tab:main_results_domain}

\end{table*}

\subsubsection{Overall Loss}
Combining the positive and negative logit perturbations, the LACL-GUI objective is formulated as
\begin{equation}
\mathcal{L}_{\text{LACL-GUI}}
=~
\log\bigl(
1+
\sum_{\tau^j\in\mathcal{T}^{+}}
e^{-\tilde{s}_{\theta}(\tau^j)/\lambda}\bigr) + \log\bigl(
1+ \sum_{\tau^j\in\mathcal{T}^{-}}
e^{\hat{s}_{\theta}(\tau^j)/\lambda}\bigr),
\end{equation}
where $\tilde{s}_{\theta}(\tau)$ and $\hat{s}_{\theta}(\tau)$ denote the perturbed logits defined for the positive and negative groups, respectively.

In practice, the group-wise statistics $\mu_w$ and $\mu_e$ are detached from the computational graph using the stop-gradient operator. Consequently, gradients are propagated only through the perturbed logits, while the group statistics are treated as constants during optimization.

\begin{theorem}
    (First-order Invariance). Assume that group-wise statistics $\mu_w$ and $\mu_e$ hold as constants via the stop-gradient operation. Then LACL-GUI objective satisfies:
    \begin{equation}
        \mathcal{L}_{LACL-GUI} = \mathcal{L}_{REAL} + \mathcal{O}(\epsilon^2).
    \end{equation}
    \label{theorem::first}
\end{theorem}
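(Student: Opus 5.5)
We Taylor-expand each of the two log-sum-exp terms of $\mathcal{L}_{\text{LACL-GUI}}$ in $\epsilon$ around $\epsilon=0$. At $\epsilon=0$ the perturbed logits reduce to $\bar{s}_\theta$, so the objective reduces to $\mathcal{L}_{REAL}$ in Eq.~\eqref{equ::REALLoss}. The claim is that the first-order term vanishes identically. This is exactly what the weighted centering by $\mu_w$ and $\mu_e$ is designed to achieve, and the weights $p^\pm$ are exactly the softmax weights that appear in the derivative of a log-sum-exp.

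For the positive term, write $f(\epsilon)=\log\bigl(1+\sum_{\mathcal{T}^+}e^{-\bar{s}_\theta(\tau^j)/\lambda-\epsilon(|\tau^j|-\mu_w)/\lambda}\bigr)$, treating $\mu_w$ as a constant as the stop-gradient assumption permits. Differentiating at $\epsilon=0$ gives
\[
f'(0)=-\frac{1}{\lambda}\cdot\frac{\sum_{\mathcal{T}^+}e^{-\bar{s}_\theta(\tau^j)/\lambda}\,(|\tau^j|-\mu_w)}{1+\sum_{\mathcal{T}^+}e^{-\bar{s}_\theta(\tau^j)/\lambda}}
=-\frac{1}{\lambda}\cdot\frac{S^+}{1+S^+}\sum_{\mathcal{T}^+}p^+(\tau^j)\,(|\tau^j|-\mu_w),
\]
where $S^+=\sum_{\mathcal{T}^+}e^{-\bar{s}_\theta/\lambda}$. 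The final sum equals $\mu_w-\mu_w=0$ by the definition of $\mu_w$, so $f'(0)=0$. The negative term is handled identically: its derivative is proportional to $\sum_{\mathcal{T}^-}p^-(\tau^j)(E^j-\mu_e)$, which vanishes by the definition of $\mu_e$. Here $|E^j|=E^j$, since $E^j\ge 1$ because $d^j\le|\tau^j|$.

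It remains to control the remainder. Each term is a smooth (real-analytic) function of $\epsilon$, so Taylor's theorem with Lagrange remainder gives $f(\epsilon)=f(0)+\tfrac{1}{2}f''(\xi)\epsilon^2$ for some $\xi$ between $0$ and $\epsilon$. The main point to check is that $f''$ is bounded uniformly. The second derivative of $\log(1+\sum_j e^{x_j})$ along direction $c$ equals a variance-type quantity under a sub-probability softmax, and it is bounded by $\max_j c_j^2$. Here $c_j=(|\tau^j|-\mu_w)/\lambda$ or $(E^j-\mu_e)/\lambda$, and these are bounded by $H/\lambda$ with $H$ the maximum horizon, because all lengths and erroneous-step counts lie in $[0,H]$. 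Hence $|f''(\xi)|\le H^2/\lambda^2$, and the remainder is $\mathcal{O}(\epsilon^2)$ with a constant independent of $\theta$.

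Summing the two terms yields $\mathcal{L}_{\text{LACL-GUI}}=\mathcal{L}_{REAL}+\mathcal{O}(\epsilon^2)$. One subtlety should be stated explicitly: the stop-gradient makes $\mu_w$ and $\mu_e$ constants for differentiation in $\theta$, but for the expansion in $\epsilon$ they are in any case independent of $\epsilon$ by definition, since they are computed from unperturbed logits. This is what allows the centering identity to be applied at $\epsilon=0$ directly.
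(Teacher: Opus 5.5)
Your proof is correct and follows essentially the same route as the paper's. Both split the objective into the positive and negative log-sum-exp terms, and both kill the linear term in $\epsilon$ through the softmax-weighted zero-mean identities $\sum_{\mathcal{T}^+} p^+(\tau^j)(|\tau^j|-\mu_w)=0$ and $\sum_{\mathcal{T}^-} p^-(\tau^j)(E^j-\mu_e)=0$, which are the paper's weighted zero-mean lemma.

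The paper expands $e^{\mp\epsilon(\cdot)/\lambda}$ inside the sum and substitutes back into the log. You differentiate the log-sum-exp directly instead, and your variance bound $0\le f''\le \max_j c_j^2$ adds an explicit remainder $\epsilon^2H^2/(2\lambda^2)$, uniform in $\theta$, which the paper leaves implicit. One small correction: strictly $E^j\in[1,H+1]$ rather than $[0,H]$, but $|E^j-\mu_e|\le H$ still holds, so the constant is unaffected.
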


Theorem~\ref{theorem::first} (See proof in Appendix) shows that LACL-GUI preserves the first-order optimization objective of REAL while introducing higher-order logit adjustments. Therefore, proposed perturbations maintain the original success-failure preference encoded by REAL at the first order, while reshaping gradient allocation and local optimization geometry within each outcome group. Consequently, LACL-GUI refines optimization without changing the underlying success-failure classification boundary. Unlike explicit length penalties that modify the optimization target, LACL-GUI introduces efficiency and error-severity preferences effectively through gradient modulation.  LACL-GUI augments trajectory-level classification with fine-grained within-group preferences. By exploiting relative efficiency and divergence information, LACL-GUI provides more informative optimization signals while retaining the stability benefits of classification-based RLVR.

\section{Experiments}

\textbf{Settings of Benchmark}: 
We evaluate LACL-GUI on \textbf{OSWorld}~\citep{xie2024osworld}, a comprehensive benchmark for evaluating MLLM-based GUI agents in realistic computer environments. Following \citep{wang2026rlanything}, we select 222 tasks from the original 369-task benchmark for training and evaluation (detailed settings are provided in the Appendix). This subset removes tasks affected by external instability, such as unreliable network conditions and corrupted files, which may introduce evaluation noise. The selected tasks cover four application domains: \textit{(i) Operating System (OS),  (ii) Office} (LibreOffice Calc, Impress, Writer), \textit{(iii) Daily-use software} (VLC Player, Thunderbird), and \textit{(iv) Professional tools} (VS Code, Gimp). This diversity ensures a comprehensive coverage of realistic user scenarios. We follow the standard OSWorld evaluation protocol~\citep{xie2024osworld}, where execution-based validation scripts determine task success and assign trajectory rewards $R(\tau)\in\{0,1\}$.

\noindent \textbf{Implementation}: We adopt Qwen3-VL-8B-Thinking and Qwen3-VL-4B-Thinking~\citep{bai2025qwen3} as backbone models for LACL-GUI and baselines training. Each trajectory is limited to a maximum of 50 interaction steps. At each iteration, we sample 6 tasks and collect $N=8$ trajectories per task after filtering by the \textit{Data Manager}. The temperature parameter $\lambda$ and perturbation coefficient $\epsilon$ are fixed to 1.0 and 0.03, respectively, while the history horizon of $h_t$ is set to 5 steps following DART~\citep{li2025efficient}. All models are trained with 8 NVIDIA H100 GPUs for up to 50 iterations. We report the best checkpoint selected according to evaluation performance across training iterations. 

\subsection{Main Results}

\paragraph{Overall Performance.}

Table~\ref{tab:main_results_domain} presents the performance comparison on the OSWorld benchmark across 8 applications. Unless otherwise specified, LACL-GUI models are trained with a similarity threshold $\Theta=0.85$. LACL-GUI achieves competitive performance against both open-source and closed-source baselines. In particular, LACL-GUI-8B achieves an overall success rate of 50.0\%, improving the backbone model Qwen3-VL-8B-Thinking by 7.3\%. It also surpasses several strong open-source and closed-source models, including DART-GUI-7B (46.7\%) and DeepMiner-Mano-7B (47.9\%). Moreover, under the same backbone architecture, LACL-GUI consistently outperforms RLVR baselines, achieving improvements over GRPO (47.2\%) and REAL (47.3\%). These results demonstrate the effectiveness of incorporating trajectory-level quality information into contrastive RLVR optimization.

\paragraph{Effect over RL Baselines.}
Compared with existing RL-based objectives, LACL-GUI consistently improves performance across different model scales. On the 8B backbone, LACL-GUI improves over REAL and GRPO by 2.7 and 2.8 percentage points, respectively. Similarly, on the 4B backbone, LACL-GUI achieves a higher success rate than REAL and GRPO (44.5\% vs. 43.0\%, 42.2\%). These gains indicate that explicitly modeling preferences within successful and failed trajectory groups provides more informative optimization signals than outcome-only supervision.

\paragraph{Domain-wise Improvements.}
LACL-GUI achieves substantial improvements across several application domains. In particular, LACL-GUI-8B obtains strong performance on \textit{OS} (70.8\%) and \textit{Calc} (27.7\%), while also achieving competitive results on \textit{Gimp} (76.9\%), \textit{Writer} (65.2\%), and \textit{Thunderbird} (66.7\%). These improvements demonstrate the effectiveness of LACL-GUI in handling diverse GUI scenarios.

\paragraph{Summary.}
Overall, LACL-GUI provides more informative trajectory-level supervision than existing RL objectives, leading to consistent improvements in GUI agent performance across varying scales and application domains.

\subsection{Ablation Studies}

\begin{table}[htbp]
\centering
\small
\setlength{\tabcolsep}{6pt}
\begin{tabular}{lcc}
\toprule
\textbf{Configuration} & 
\textbf{Success Rate (\%)} & 
\textbf{Gain (\%)} \\
\midrule
Prompt Baseline & 42.7 & -- \\
+ Decoupled Architecture & 45.4 & +2.7 \\
+ Success Buffer & 46.5 & +1.1 \\
+ Trajectory Filter & 47.2 & +0.7 \\
+ LACL-GUI Objective & \textbf{50.0} & \textbf{+2.8} \\
\bottomrule
\end{tabular}
\caption{Ablation study of different components in the LACL-GUI framework. 
Each row incrementally introduces one additional component over the previous configuration. 
The prompt-based baseline uses Qwen3-VL-8B-Thinking, while GRPO is adopted for all training configurations except the proposed LACL-GUI objective.}
\label{tab:component_ablation}
\end{table}

\subsubsection{Effect of Architecture Components}
We first investigate the contribution of each component in LACL-GUI by progressively adding the decoupled architecture, success buffer, trajectory filtering, and finally the LACL-GUI objective to the prompt-based Qwen3-VL-8B-Thinking baseline. Results are shown in Table~\ref{tab:component_ablation}.

The architecture components improve performance from 42.7\% to 47.2\%, demonstrating the benefits of efficient rollout management, successful trajectory reuse, and informative contrastive groups under sparse rewards. Introducing the LACL-GUI objective achieves the largest single gain (+2.8\%), further improving performance to 50.0\%. Since this improvement is obtained on top of the complete GRPO-based framework, the gain mainly comes from the proposed length-aware preference modeling and divergence-aware failure refinement rather than architectural changes alone.

\subsubsection{Effect of Length-aware Preference Modeling}

\begin{figure}[htbp]
    \centering
    \includegraphics[width=0.8\linewidth]{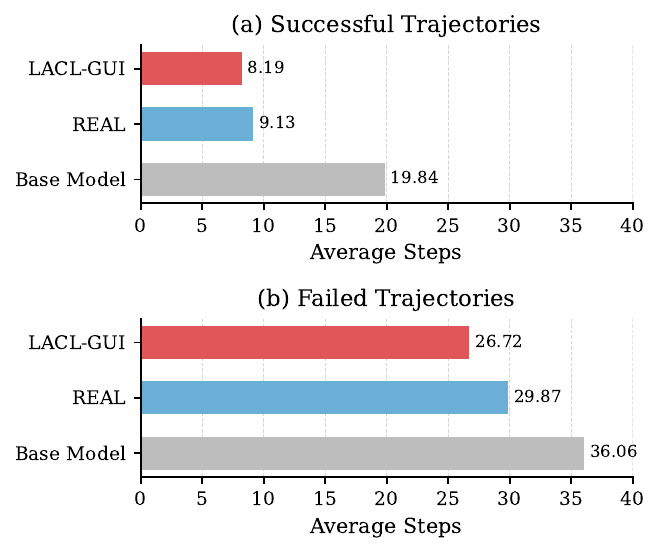}
    \caption{
    Effect of length-aware preference modeling in LACL-GUI. 
    We compare average trajectory lengths on tasks where all three methods achieve the same outcome. 
    }
    \label{fig:length_ablation}
\end{figure}

We further evaluate whether LACL-GUI objective improves trajectory efficiency beyond outcome-based optimization. To isolate trajectory preference from success-rate differences, we only analyze tasks where all methods achieve the same outcome. Results are shown in Figure~\ref{fig:length_ablation}.

For successful trajectories, LACL-GUI reduces the average length from 9.13 steps (REAL) to 8.19 steps, achieving a 10.3\% reduction, while the base model requires 19.84 steps. This verifies our motivation that, under the same initial state and goal, shorter successful executions generally indicate fewer redundant interactions and higher-quality solutions. For failed trajectories, LACL-GUI similarly reduces the average length from 29.87 to 26.72 steps (10.5\% reduction). Unlike global length penalties that may favor short failures over successful behaviors, LACL-GUI performs length modulation within the negative group through divergence-aware erroneous-step modeling, suppressing prolonged ineffective failed trajectories while preserving near-success trajectories as informative signals.

\subsubsection{Sensitivity Analysis of Divergence Estimation}

\begin{figure}[t]
    \centering
    \includegraphics[width=0.85\linewidth]{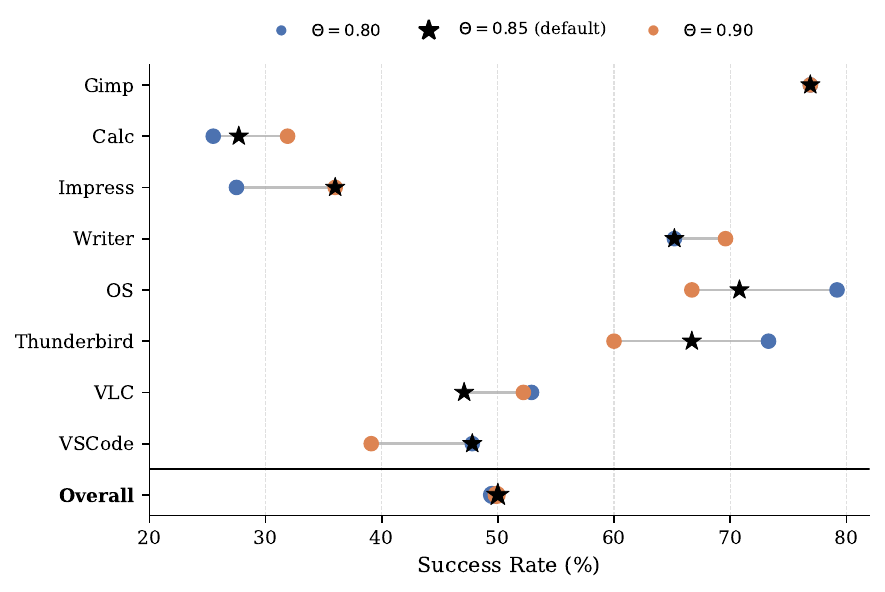}
    \caption{
   Sensitivity analysis of the similarity threshold $\Theta$ for divergence estimation in LACL-GUI. The default setting ($\Theta=0.85$) is highlighted, and results are reported across different domains.
   }
    \label{fig:theta_sensitivity}
\end{figure}

The divergence estimation in LACL-GUI relies on a similarity threshold $\Theta$ to determine the deviation point between failed and successful trajectories. We evaluate its sensitivity by varying $\Theta$ from 0.80 to 0.90 and report the resulting performance across different domains in Figure~\ref{fig:theta_sensitivity}.

LACL-GUI maintains stable performance across different threshold settings. The overall success rate changes only marginally (49.5\%, 50.0\%, and 49.9\%), despite larger variations in individual domains. This demonstrates that the effectiveness of divergence-aware failure refinement does not rely on precise threshold selection. Instead, the relative ordering of failed trajectories according to their divergence behaviors remains stable, enabling reliable identification of near-successful failures.



\section{Conclusion}

In this work, we present LACL-GUI, a length-aware contrastive learning framework for training GUI agents under sparse trajectory-level supervision.  LACL-GUI introduces fine-grained preference modeling within both successful and failed trajectory groups. 
It encourages efficient successful behaviors through length-aware positive trajectory refinement and distinguishes informative near-successful failures from early-diverging attempts through divergence-aware failure modeling. To support optimization in long-horizon GUI environments,  a decoupled asynchronous training architecture is employed with trajectory filtering and a successful trajectory buffer, enabling efficient rollouts collection and providing successful references for divergence estimation. Extensive experiments on the OSWorld benchmark demonstrate that LACL-GUI consistently improves GUI task completion performance over existing RL-based baselines. 
Future work will explore more adaptive trajectory quality estimation strategies and extend to broader interactive environments. 




\bibliography{iclr2025_conference}
\bibliographystyle{iclr2025_conference}

\appendix
\section{Appendix}

\subsection{Proof of Theorem 1}
\noindent We first prove Lemma \ref{lemma:zero_mean}:

\begin{lemma}[Weighted Zero-Mean Property]
\label{lemma:zero_mean}
For the proposed logit perturbations in LACL-GUI, the weighted perturbations satisfy
\begin{equation}
\sum_{\tau^j\in\mathcal{T}^{+}}
e^{-\bar{s}_{\theta}(\tau^j)/\lambda}
\bigl(|\tau^j|-\mu_w\bigr)=0
\end{equation}
and
\begin{equation}
\sum_{\tau^j\in\mathcal{T}^{-}}
e^{\bar{s}_{\theta}(\tau^j)/\lambda}
\bigl(E^j-\mu_e\bigr)=0.
\end{equation}
\end{lemma}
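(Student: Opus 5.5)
The argument is a direct computation from the definitions of $p^{+}$, $\mu_w$, $p^{-}$ and $\mu_e$; no earlier result is needed beyond these definitions. For the positive group, let
\[
Z^{+}=\sum_{\tau^j\in\mathcal{T}^{+}} e^{-\bar{s}_{\theta}(\tau^j)/\lambda}>0,
\]
which is strictly positive because $\mathcal{T}^{+}$ is nonempty (the Data Manager keeps only mixed groups) and every exponential is positive. Then $p^{+}(\tau^j)=e^{-\bar{s}_{\theta}(\tau^j)/\lambda}/Z^{+}$. Splitting the sum in the first identity gives
\[
\sum_{\tau^j\in\mathcal{T}^{+}} e^{-\bar{s}_{\theta}(\tau^j)/\lambda}|\tau^j| \;-\; \mu_w Z^{+}.
\]
By the definition of $\mu_w$, the first term equals $Z^{+}\sum_{\tau^j\in\mathcal{T}^{+}} p^{+}(\tau^j)|\tau^j| = Z^{+}\mu_w$, so the difference vanishes.

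The negative identity follows the same way. Set $Z^{-}=\sum_{\tau^j\in\mathcal{T}^{-}} e^{\bar{s}_{\theta}(\tau^j)/\lambda}>0$, so that $p^{-}(\tau^j)=e^{\bar{s}_{\theta}(\tau^j)/\lambda}/Z^{-}$. Then
\[
\sum_{\tau^j\in\mathcal{T}^{-}} e^{\bar{s}_{\theta}(\tau^j)/\lambda}E^j \;=\; Z^{-}\mu_e,
\]
and subtracting $\mu_e Z^{-}$ gives zero. Equivalently, both identities are instances of one fact: a quantity minus its own weighted mean has zero weighted mean, once we multiply through by the normalizer.

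The only real obstacle is notational. The definition of $\mu_e$ writes $|E^j|$, while the perturbation and the lemma use $E^j$. I would note that $E^j=|\tau^j|-d^j+1\ge 1>0$, since $d^j\le|\tau^j|$, so $|E^j|=E^j$ and the two expressions agree.

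I would also remark that the weights are evaluated at the same unperturbed logits $\bar{s}_{\theta}$ used to define $\mu_w$ and $\mu_e$. This matching is exactly what makes the identity hold. It is also why the lemma then yields the vanishing first-order term in the Taylor expansion of $\log(1+\sum e^{\mp\tilde{s}/\lambda})$ in $\epsilon$, which is the step needed for Theorem~\ref{theorem::first}.
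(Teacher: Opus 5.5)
Your proof is correct and matches the paper's argument. You split the weighted sum, use the definition of $\mu_w$ to rewrite $\sum_{\tau^j\in\mathcal{T}^{+}} e^{-\bar{s}_{\theta}(\tau^j)/\lambda}|\tau^j|$ as $\mu_w \sum_{\tau^j\in\mathcal{T}^{+}} e^{-\bar{s}_{\theta}(\tau^j)/\lambda}$, cancel, and treat the negative group the same way; your extra remarks (the normalizer is strictly positive, and $|E^j| = E^j$ because $E^j \ge 1$) settle points the paper leaves implicit.
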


\begin{proof}
For the positive group, the weighted mean trajectory length is defined as
\begin{equation}
\mu_w=
\sum_{\tau^j\in\mathcal{T}^{+}}
p^{+}(\tau^j)\,|\tau^j|,
\end{equation}
where
\begin{equation}
p^{+}(\tau^j)
=
\frac{
e^{-\bar{s}_{\theta}(\tau^j)/\lambda}
}{
\sum_{\tau^j\in\mathcal{T}^{+}}
e^{-\bar{s}_{\theta}(\tau^j)/\lambda}
}.
\end{equation}
Therefore,
\begin{align*}
&\sum_{\tau^j\in\mathcal{T}^{+}}
e^{-\bar{s}_{\theta}(\tau^j)/\lambda}
\bigl(|\tau^j|-\mu_w\bigr)
\\
&\quad=
\sum_{\tau^j\in\mathcal{T}^{+}}
e^{-\bar{s}_{\theta}(\tau^j)/\lambda}\,|\tau^j|
-
\mu_w
\sum_{\tau^j\in\mathcal{T}^{+}}
e^{-\bar{s}_{\theta}(\tau^j)/\lambda}
\\
&\quad=
\mu_w
\sum_{\tau^j\in\mathcal{T}^{+}}
e^{-\bar{s}_{\theta}(\tau^j)/\lambda}
-
\mu_w
\sum_{\tau^j\in\mathcal{T}^{+}}
e^{-\bar{s}_{\theta}(\tau^j)/\lambda}
\\
&\quad=0.
\end{align*}
The negative-group identity follows by the same argument with the corresponding weighted mean \(\mu_e\) and weights \(e^{\bar{s}_{\theta}(\tau^j)/\lambda}\).
\end{proof}

\textbf{Theorem 1.} \textit{(First-order Invariance). Assume that group-wise statistics $\mu_w$ and $\mu_e$ hold as constants via the stop-gradient operation. Then LACL-GUI objective satisfies:}
    \begin{equation}
        \mathcal{L}_{LACL-GUI} = \mathcal{L}_{REAL} + \mathcal{O}(\epsilon^2).
    \end{equation}

\begin{proof}
We denote the positive-group loss of REAL and LACL-GUI by $\mathcal{L}^{+}_{\mathrm{REAL}}$ and $\tilde{\mathcal{L}}^{+}$, respectively, and the negative-group loss of REAL and LACL-GUI by $\mathcal{L}^{-}_{\mathrm{REAL}}$ and $\tilde{\mathcal{L}}^{-}$, respectively.

We first consider the positive group. The original REAL positive loss (in log-sum-exp form) is
\begin{equation}
\mathcal{L}^{+}_{\mathrm{REAL}}
=
\log\Bigl(
1+
\sum_{\tau^j\in\mathcal{T}^{+}}
e^{-\bar{s}_{\theta}(\tau^j)/\lambda}
\Bigr).
\end{equation}

After perturbation it becomes
\begin{equation}
\tilde{\mathcal{L}}^{+}
=
\log\Bigl(
1+
\sum_{\tau^j\in\mathcal{T}^{+}}
e^{-(\bar{s}_{\theta}(\tau^j)+\epsilon g(\tau^j))/\lambda}
\Bigr).
\end{equation}
where $g(\tau^j)) = |\tau^j| - \mu_w$.

\noindent Using the first-order Taylor expansion
\begin{equation}
e^{-\epsilon g(\tau^j)/\lambda}
=
1 - \frac{\epsilon g(\tau^j)}{\lambda} + \mathcal{O}(\epsilon^{2}),
\end{equation}
we can expand the sum as
\begin{align*}
&\sum_{\tau^j\in\mathcal{T}^{+}}
e^{-(\bar{s}_{\theta}(\tau^j)+\epsilon g(\tau^j))/\lambda}
\\
&\quad=
\sum_{\tau^j\in\mathcal{T}^{+}}
e^{-\bar{s}_{\theta}(\tau^j)/\lambda}
\cdot
e^{-\epsilon g(\tau^j)/\lambda}
\\
&\quad=
\sum_{\tau^j\in\mathcal{T}^{+}}
e^{-\bar{s}_{\theta}(\tau^j)/\lambda}
\left(
1 - \frac{\epsilon g(\tau^j)}{\lambda} + \mathcal{O}(\epsilon^{2})
\right)
\\
&\quad=
\sum_{\tau^j\in\mathcal{T}^{+}}
e^{-\bar{s}_{\theta}(\tau^j)/\lambda}
-
\frac{\epsilon}{\lambda}
\sum_{\tau^j\in\mathcal{T}^{+}}
e^{-\bar{s}_{\theta}(\tau^j)/\lambda}\,g(\tau^j)
\\& \qquad +\mathcal{O}(\epsilon^{2}).
\end{align*}

The key observation is that the coefficient of the first-order term vanishes.
By Lemma~\ref{lemma:zero_mean}, we have the weighted zero-mean property
\begin{equation}
\sum_{\tau^j\in\mathcal{T}^{+}}
e^{-\bar{s}_{\theta}(\tau^j)/\lambda}\,g(\tau^j)
=0,
\end{equation}

Consequently, the linear term in \(\epsilon\) is exactly zero, and we obtain
\begin{equation}
\sum_{\tau^j\in\mathcal{T}^{+}}
e^{-(\bar{s}_{\theta}(\tau^j)+\epsilon g(\tau^j))/\lambda}
=
\sum_{\tau^j\in\mathcal{T}^{+}}
e^{-\bar{s}_{\theta}(\tau^j)/\lambda}
+
\mathcal{O}(\epsilon^{2}).
\end{equation}
Substituting this back into the log-sum-exp loss immediately yields
\begin{equation}
\tilde{\mathcal{L}}^{+}
=
\mathcal{L}^{+}_{\mathrm{REAL}}
+\mathcal{O}(\epsilon^{2}).
\end{equation}

We now turn to the negative group. The original REAL negative loss is
\begin{equation}
\mathcal{L}^{-}_{\mathrm{REAL}}
=
\log\Bigl(
1+
\sum_{\tau^j\in\mathcal{T}^{-}}
e^{\bar{s}_{\theta}(\tau^j)/\lambda}
\Bigr).
\end{equation}
After applying the perturbation \(\hat{s}_{\theta}(\tau^j)=\bar{s}_{\theta}(\tau^j)+\epsilon y(\tau^j)\), where \(y(\tau^j)=E(\tau^j)-\mu_e\), it becomes
\begin{equation}
\tilde{\mathcal{L}}^{-}
=
\log\Bigl(
1+
\sum_{\tau^j\in\mathcal{T}^{-}}
e^{(\bar{s}_{\theta}(\tau^j)+\epsilon y(\tau^j))/\lambda}
\Bigr).
\end{equation}
Using the first-order expansion
\begin{equation}
e^{\epsilon y(\tau^j)/\lambda}
=
1+\frac{\epsilon y(\tau^j)}{\lambda}+\mathcal{O}(\epsilon^{2})
\end{equation}
and following the same algebraic steps as in the positive group, we obtain
\begin{align*}
&\sum_{\tau^j\in\mathcal{T}^{-}}
e^{(\bar{s}_{\theta}(\tau^j)+\epsilon y(\tau^j))/\lambda}
\\
&\quad=
\sum_{\tau^j\in\mathcal{T}^{-}}
e^{\bar{s}_{\theta}(\tau^j)/\lambda}
+
\frac{\epsilon}{\lambda}
\sum_{\tau^j\in\mathcal{T}^{-}}
e^{\bar{s}_{\theta}(\tau^j)/\lambda}\,y(\tau^j)
+\mathcal{O}(\epsilon^{2}).
\end{align*}
By the weighted zero-mean property in Lemma~\ref{lemma:zero_mean} , the coefficient of the linear term vanishes. Therefore,
\begin{equation}
\tilde{\mathcal{L}}^{-}
=
\mathcal{L}^{-}_{\mathrm{REAL}}
+\mathcal{O}(\epsilon^{2}).
\end{equation}

Combining the results for both groups, we conclude that
\begin{equation}
\mathcal{L}_{\mathrm{LACL\text{-}GUI}}
=
\mathcal{L}_{\mathrm{REAL}}
+\mathcal{O}(\epsilon^{2}).
\end{equation}

Thus the proposed perturbations preserve the REAL objective up to first order while redistributing optimization emphasis in higher-order within each outcome group.
\end{proof}

\subsection{Experiment Details}

\subsubsection{OSWorld Benchmark Settings}
We evaluate on a subset of 222 tasks selected from the full OSWorld benchmark of 369 tasks. 
All experiments are conducted in Baidu's RLE virtual machine environment. We exclude \textit{Chrome} and \textit{Multi-app} tasks.
This task filtering is motivated by the need for stable training and evaluation conditions. 
First, \textit{Chrome} tasks require reliable external network connectivity, which is unstable in our RLE environment and frequently leads to non-reproducible outcomes. 
Second, \textit{Multi-app} tasks often depend on external asset files hosted on the official huggingface cache (for example, task \texttt{3f05f3b9-29ba-4b6b-95aa-2204697ffc06}, which requires the audio file ``Zhou Xuan - Nights in Shanghai.mp3''). 
These assets were frequently missing or incomplete in our setup, resulting in systematic environment failures rather than genuine agent errors. 
Both issues introduce significant instability into both training and testing, and the corresponding tasks are therefore excluded.

After removing the affected \textit{Chrome} and \textit{Multi-app} tasks, the remaining 222 tasks span eight application domains. 
Their distribution is summarized in Table~\ref{tab:task_distribution}.

\begin{table}[h]
\centering
\small
\begin{tabular}{lc}
\toprule
Domain & Number of Tasks \\
\midrule
Gimp & 26 \\
LibreOffice Calc & 47 \\
LibreOffice Impress & 47 \\
LibreOffice Writer & 23 \\
OS & 24 \\
Thunderbird & 15 \\
VLC & 17 \\
VS Code & 23 \\
\midrule
Total & 222 \\
\bottomrule
\end{tabular}
\caption{Distribution of the 222 selected OSWorld tasks across application domains.}
\label{tab:task_distribution}
\end{table}

\subsubsection{Effect of Length-aware Preference Modeling}

To further examine the effect of length-aware preference modeling, we conduct a fine-grained analysis of average trajectory lengths on tasks where the base model Qwen3-VL-8B-Thinking, REAL, and LACL-GUI all achieve the same outcome. This isolates the impact of trajectory preference from differences in task success rate. Results are reported separately for successful and failed trajectories.

Table~\ref{tab:success_length} presents the average trajectory lengths on the 70 tasks that are successfully completed by all three methods. LACL-GUI reduces the overall average length from 9.13 steps (REAL) to 8.19 steps (a 10.3\% reduction). The reduction holds in the majority of application domains (\textit{Gimp}, \textit{LibreOffice Calc}, \textit{OS}, \textit{Thunderbird}, and \textit{VS Code}), although the effect is not uniform: \textit{LibreOffice Writer} shows a slight increase and \textit{VLC} exhibits a clear increase. These results largely support the design motivation that, under identical initial states and goals, shorter successful executions generally correspond to higher-quality solutions with fewer redundant interactions.

Table~\ref{tab:fail_length} reports the corresponding statistics on the 93 tasks where all three methods fail. LACL-GUI again yields a shorter average trajectory (26.72 steps) compared with REAL (29.87 steps), corresponding to a 10.5\% reduction. Length reduction is observed in six out of eight domains, with particularly large improvements in \textit{Gimp} and \textit{OS}. Exceptions occur in \textit{LibreOffice Writer} and \textit{VS Code}, where LACL-GUI produces longer failed trajectories. Because length modulation is applied only within the negative group via divergence-aware erroneous-step modeling, LACL-GUI tends to suppress prolonged ineffective failures while still preserving near-success trajectories as informative learning signals.

\begin{table}[t]
\centering
\small
\begin{tabular}{lcccc}
\toprule
Domain & Base & LACL-GUI & REAL & Rel.\ to REAL \\
\midrule
Gimp & 36.46 & 12.08 & 13.38 & $-9.7\%$ \\
Calc & 17.50 & 10.17 & 16.00 & $-36.4\%$ \\
Impress & 4.25 & 4.25 & 4.25 & $0.0\%$ \\
Writer & 10.73 & 5.36 & 5.27 & $+1.7\%$ \\
OS & 21.67 & 4.25 & 5.50 & $-22.7\%$ \\
Thunderbird & 7.86 & 6.71 & 7.29 & $-8.0\%$ \\
VLC & 16.75 & 10.75 & 5.00 & $+115.0\%$ \\
VS Code & 30.67 & 13.44 & 15.56 & $-13.6\%$ \\
\midrule
Average & 19.84 & 8.19 & 9.13 & $-10.3\%$ \\
\bottomrule
\end{tabular}
\caption{Average trajectory lengths on the 70 tasks successfully completed by all three methods. The last column shows the relative change of LACL-GUI with respect to REAL.}
\label{tab:success_length}
\end{table}

\begin{table}[t]
\centering
\small
\begin{tabular}{lcccc}
\toprule
Domain & Base & LACL-GUI & REAL & Rel.\ to REAL \\
\midrule
Gimp & 33.80 & 5.20 & 25.00 & $-79.2\%$ \\
Calc & 39.91 & 30.12 & 33.97 & $-11.3\%$ \\
Impress & 30.48 & 27.86 & 31.24 & $-10.8\%$ \\
Writer & 37.83 & 46.67 & 42.67 & $+9.4\%$ \\
OS & 35.67 & 10.67 & 22.67 & $-52.9\%$ \\
Thunderbird & 50.00 & 26.75 & 27.75 & $-3.6\%$ \\
VLC & 33.14 & 13.14 & 15.57 & $-15.6\%$ \\
VS Code & 36.86 & 25.14 & 16.57 & $+51.7\%$ \\
\midrule
Average & 36.06 & 26.72 & 29.87 & $-10.5\%$ \\
\bottomrule
\end{tabular}
\caption{Average trajectory lengths on the 93 tasks where all three methods fail. The last column shows the relative change of LACL-GUI with respect to REAL.}
\label{tab:fail_length}
\end{table}

\subsection{Real-world Trajectories}

To provide an intuitive illustration of why length-aware preference within successful trajectories and divergence-aware modeling within failed trajectories are beneficial, we present concrete trajectory examples from the same \textit{Gimp} task:
\begin{quote}
\textit{Please help change GIMP's theme from dark to light.}
\end{quote}
(task ID: \\
\texttt{7767eef2-56a3-4cea-8c9f-48c070c7d65b}).
This task requires navigating \textit{Gimp}'s preference menus to switch the interface theme. 
Although the goal is relatively straightforward, different rollouts can exhibit substantial variation in efficiency and in how closely failed attempts approach a successful solution. 
We therefore examine both successful and failed trajectories produced for this task to highlight the fine-grained quality differences that outcome-level supervision alone cannot capture.

\paragraph{Positive Group: Efficient vs.\ Redundant Successful Trajectories.}

Although both trajectories receive the same success reward, successful executions can exhibit substantially different levels of efficiency. 
We illustrate this phenomenon using the \textit{Gimp} theme-switching task. 
The following example compares a concise successful trajectory generated by LACL-GUI-8B (6 steps) with a longer successful trajectory generated by REAL (27 steps).

The 6-step trajectory follows the optimal interaction path: it directly opens the Edit menu, enters Preferences, selects the Theme option, switches to Light mode, and confirms the change. 
No exploratory or recovery actions are involved.

\begin{tcolorbox}[
  colback=gray!8,
  colframe=gray!60,
  boxrule=0.4pt,
  arc=1pt,
  left=4pt,
  right=4pt,
  top=3pt,
  bottom=3pt,
  fontupper=\small\ttfamily
]
Step 1: Click on the "Edit" menu in the top menu bar.\\
\quad pyautogui.click(124, 72)\\[2pt]

Step 2: Click on the "Preferences" option in the Edit menu.\\
\quad pyautogui.click(180, 640)\\[2pt]

Step 3: Click on the "Theme" option under the Interface section.\\
\quad pyautogui.click(157, 667)\\[2pt]

Step 4: Select the "Light" theme option.\\
\quad pyautogui.click(363, 622)\\[2pt]

Step 5: Click the "OK" button to apply the theme.\\
\quad pyautogui.click(832, 867)\\[2pt]

Step 6: Terminate (task completed successfully).\\
\quad DONE
\end{tcolorbox}

\begin{figure}[H]
\centering
\includegraphics[width=0.92\columnwidth]{}
\caption{LACL-GUI Step 1 (Success)}
\end{figure}

\begin{figure}[H]
\centering
\includegraphics[width=0.92\columnwidth]{}
\caption{LACL-GUI Step 2 (Success)}
\end{figure}

\begin{figure}[H]
\centering
\includegraphics[width=0.92\columnwidth]{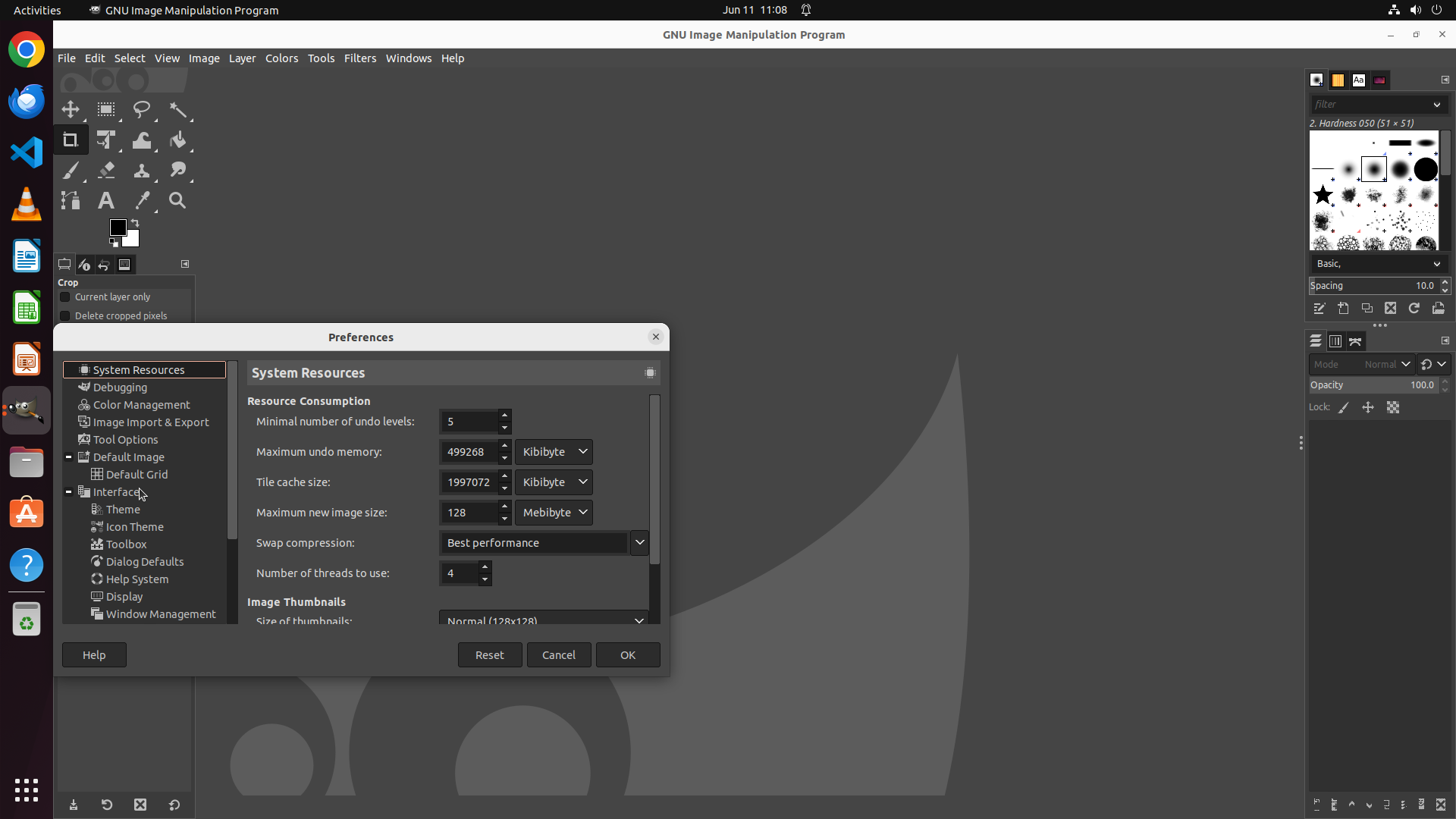}
\caption{LACL-GUI Step 3 (Success)}
\end{figure}

\begin{figure}[H]
\centering
\includegraphics[width=0.92\columnwidth]{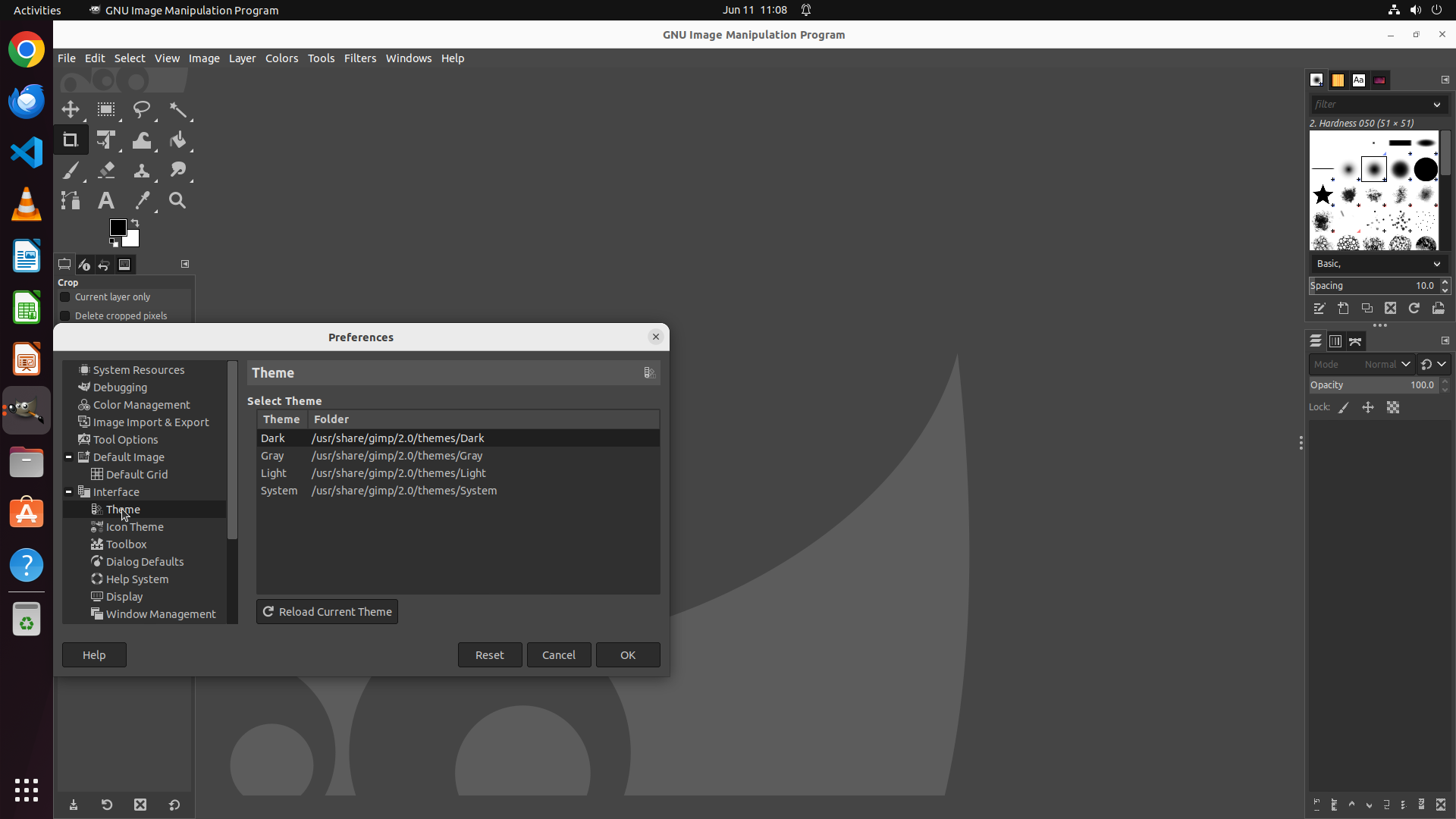}
\caption{LACL-GUI Step 4 (Success)}
\end{figure}

\begin{figure}[H]
\centering
\includegraphics[width=0.92\columnwidth]{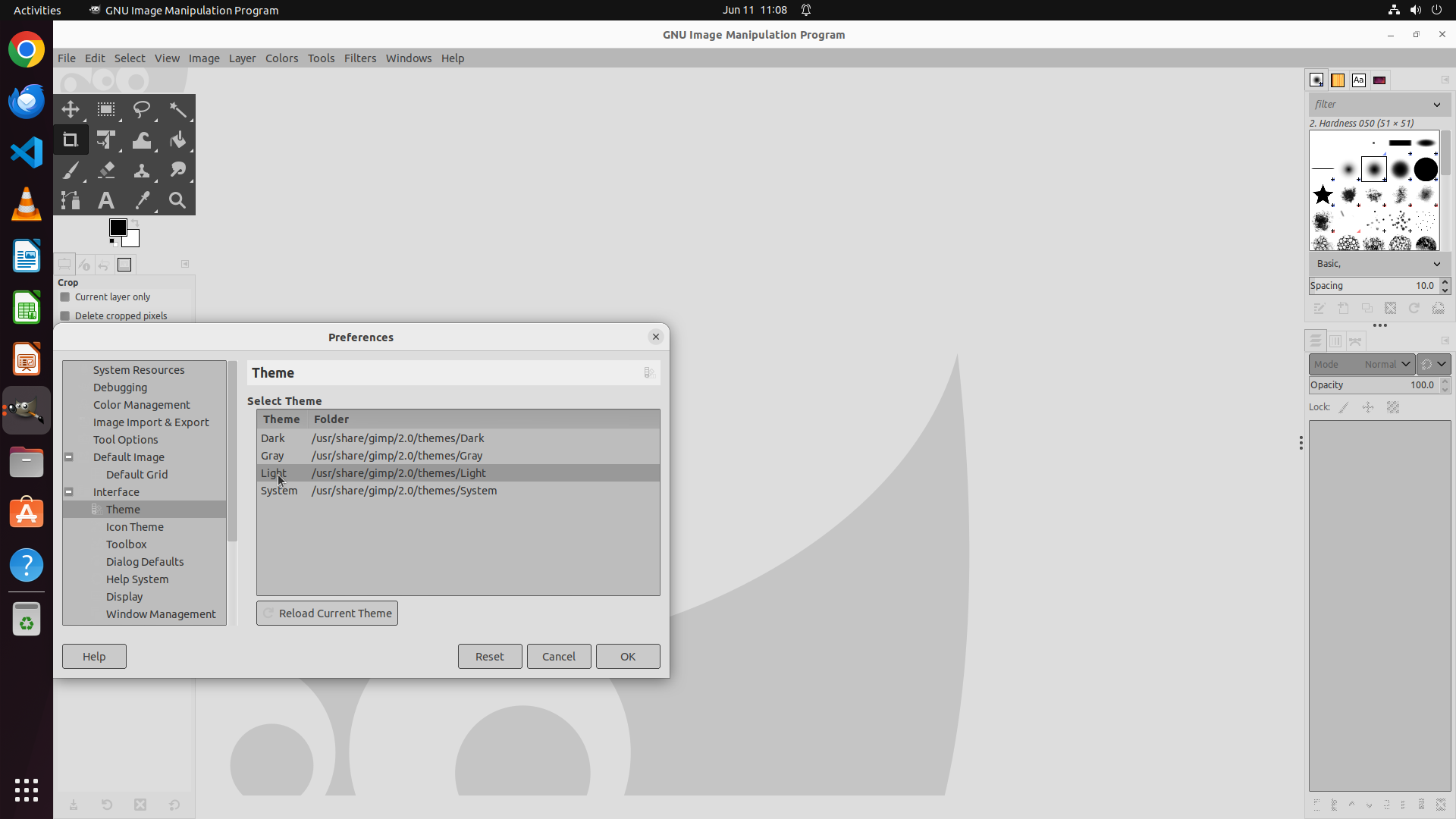}
\caption{LACL-GUI Step 5 (Success)}
\end{figure}

\begin{figure}[H]
\centering
\includegraphics[width=0.92\columnwidth]{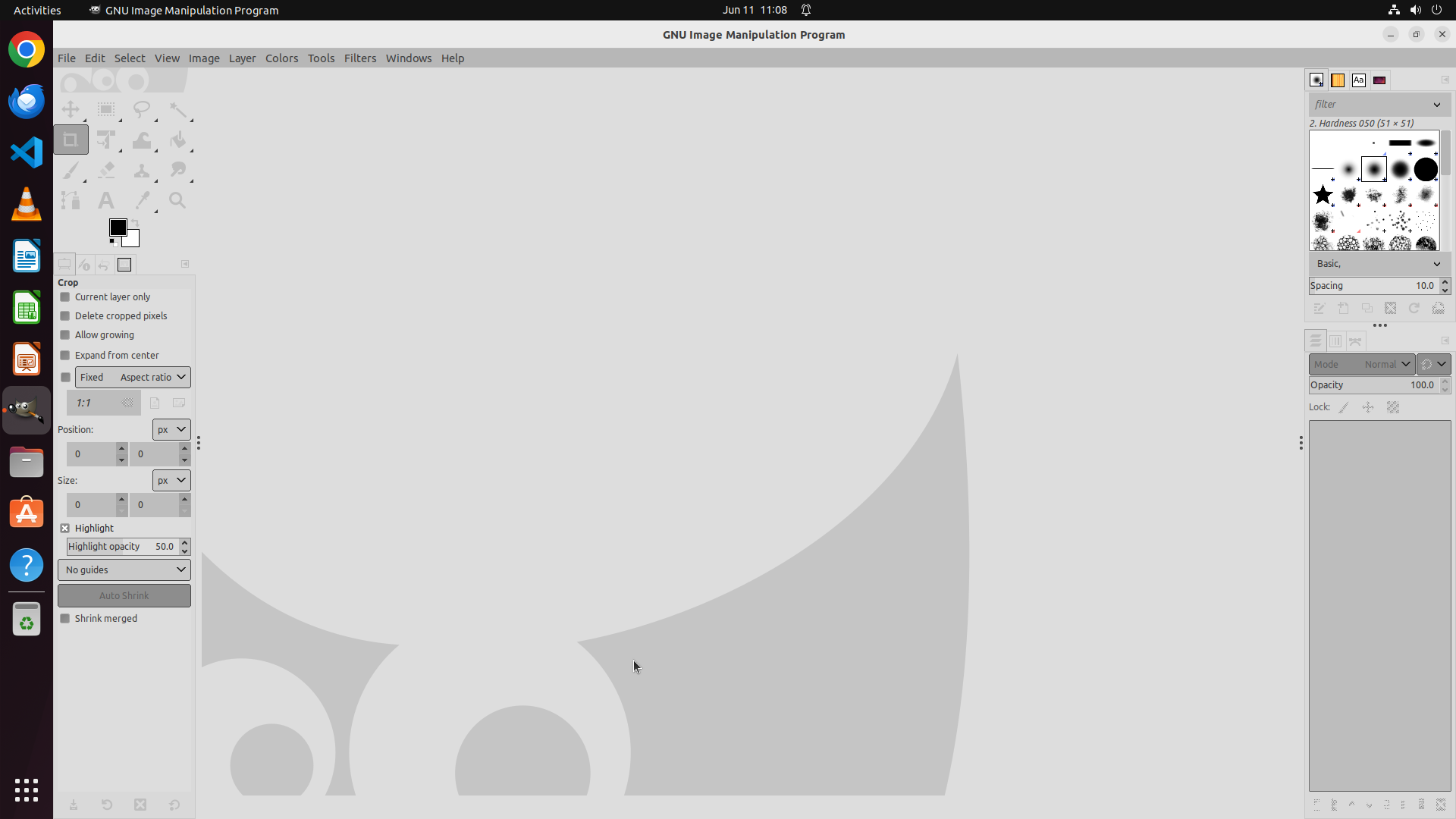}
\caption{LACL-GUI Step 6 (Success)}
\end{figure}

In contrast, the 27-step trajectory produced by Qwen3-VL-8B-REAL also successfully completes the task but contains extensive unnecessary interactions:

\begin{tcolorbox}[
  colback=gray!8,
  colframe=gray!60,
  boxrule=0.4pt,
  arc=1pt,
  left=4pt,
  right=4pt,
  top=3pt,
  bottom=3pt,
  fontupper=\small\ttfamily
]
Step 1-4: Navigate to the Theme panel correctly.\\[2pt]

Step 5-15: Repeatedly click the "Theme" item in the left sidebar.\\
\quad pyautogui.click(163, 667) \quad \text{(repeated)}\\[2pt]

Step 16: Click the "Theme" item again.\\[2pt]

Step 17-24: Continue clicking the same screen location after the UI state has changed.\\
\quad pyautogui.click(167, 667) \quad \text{(repeated)}\\[2pt]

Step 25: Select the "Light" theme option.\\
\quad pyautogui.click(359, 620)\\[2pt]

Step 26: Click the "OK" button to apply the change.\\
\quad pyautogui.click(830, 859)\\[2pt]

Step 27: Terminate (task completed successfully).\\
\quad DONE
\end{tcolorbox}

The long trajectory reveals two types of inefficiency. 
First, from Step 5 to Step 15, the agent repeatedly executes a valid action without making progress, producing redundant interactions. 
Second, from Step 17 to Step 24, the agent continues clicking the previous target location even after the interface state has changed, resulting in erroneous actions caused by outdated state understanding.

These additional steps do not contribute to task completion and are not desirable behaviors to reinforce, despite the trajectory receiving the same binary success reward. 
Therefore, length-aware positive preference in LACL-GUI assigns stronger preference to concise successful trajectories, encouraging the policy to learn efficient solutions rather than merely optimizing for eventual task completion.

The screenshots below highlight representative failure patterns in the long successful trajectory. 
Step 5 demonstrates redundant interactions where the agent repeatedly selects an already active option, while Step 17 illustrates incorrect repeated clicking after the UI state has changed.

\begin{figure}[H]
\centering
\includegraphics[width=0.92\columnwidth]{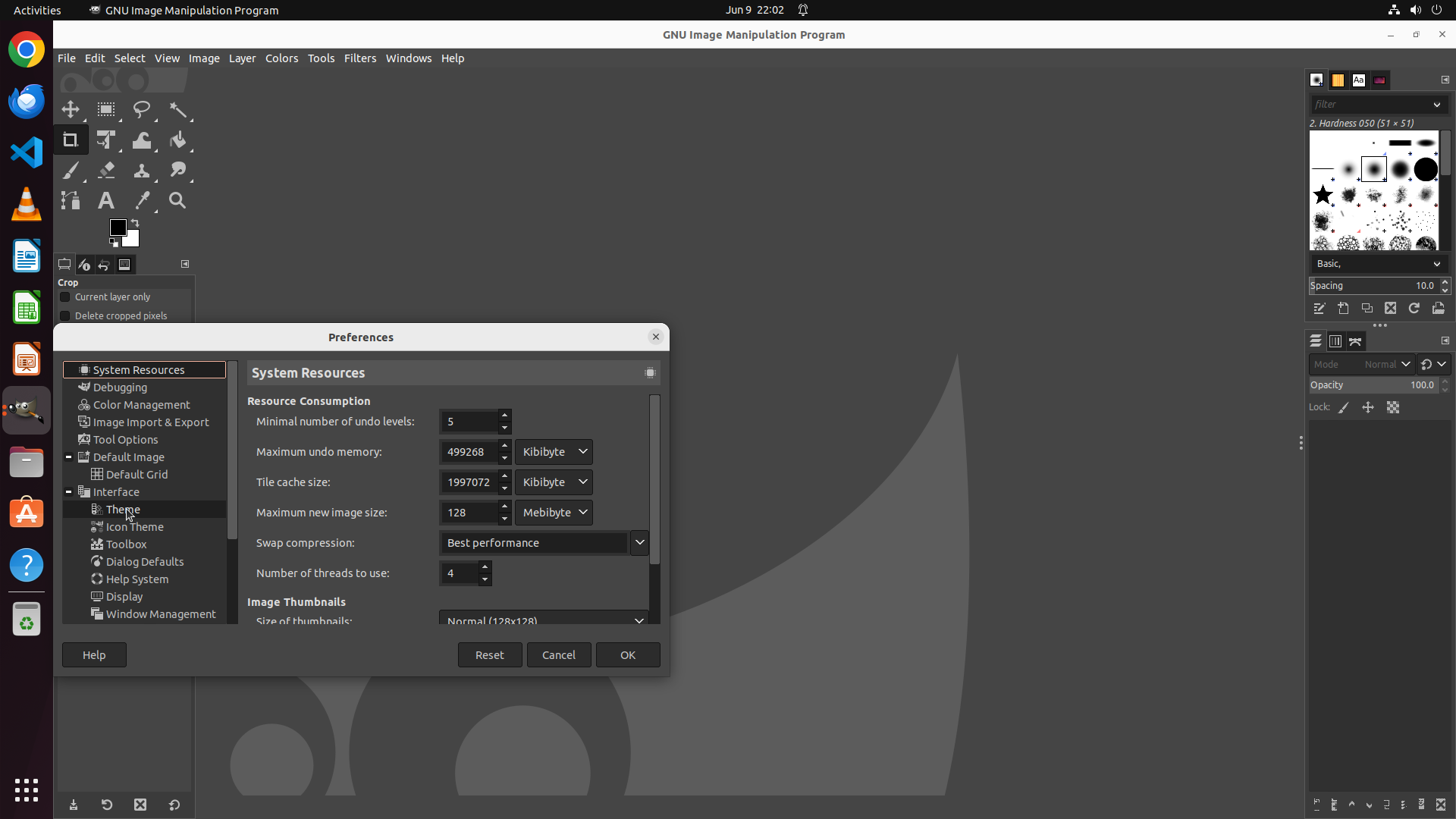}
\caption{A redundant interaction in the long successful trajectory (Step 5).}
\end{figure}

\begin{figure}[H]
\centering
\includegraphics[width=0.92\columnwidth]{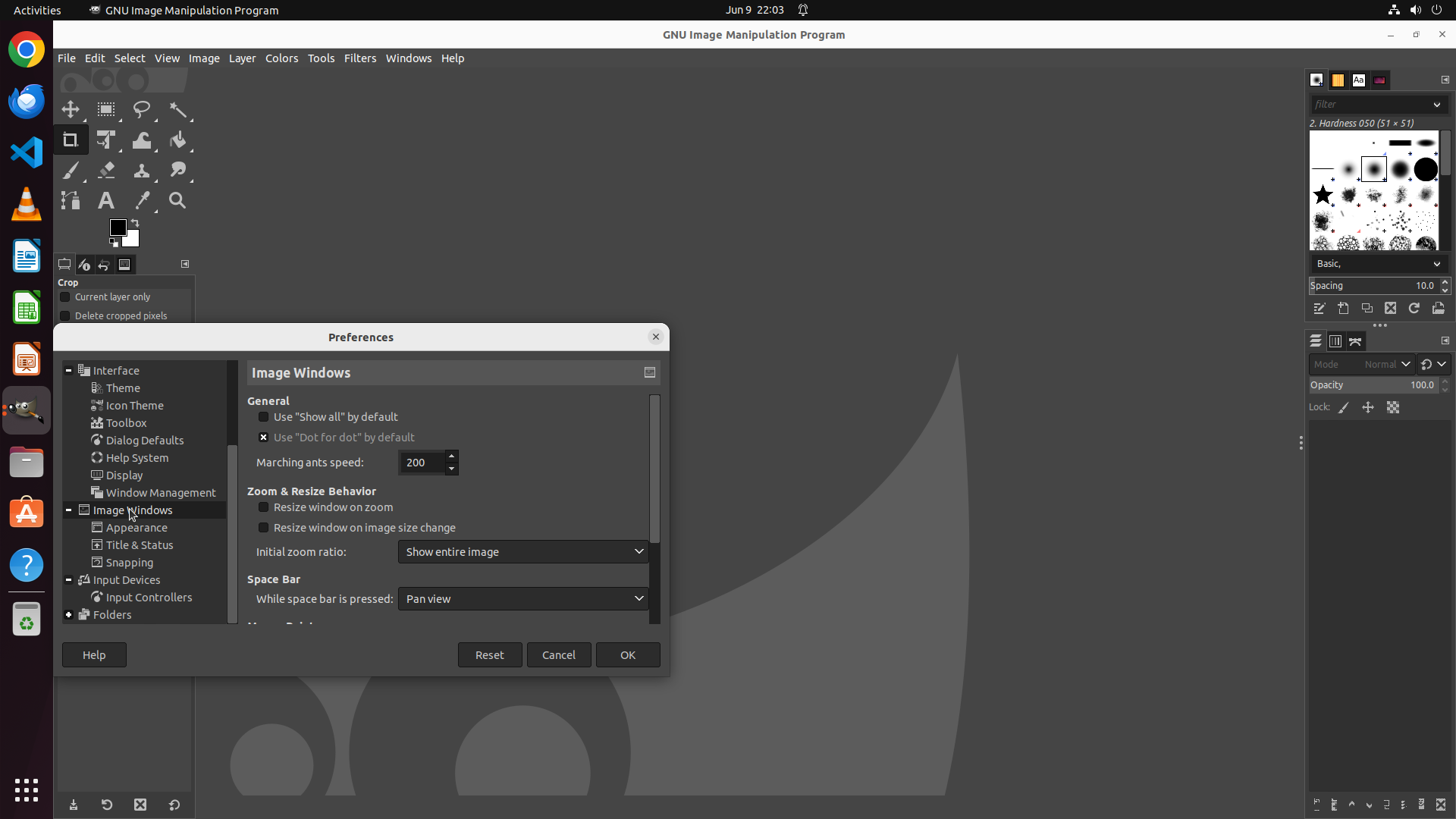}
\caption{An erroneous repeated action after the UI state changes (Step 17).}
\end{figure}

\paragraph{Negative Group: Different Failure Patterns Beyond Binary Rewards.}

While successful trajectories can differ in efficiency, failed trajectories can also exhibit substantially different qualities despite receiving the same binary reward. 
We illustrate this phenomenon using two failed trajectories generated by Qwen3-VL-8B-Thinking on the same \textit{Gimp} theme-switching task. 
Both trajectories obtain a reward of 0, but they exhibit different failure behaviors after deviating from the successful execution path.

For reference, the successful trajectory generated by LACL-GUI-8B changes the theme by selecting the ``Theme'' option at Step 4, followed by choosing the ``Light'' theme and confirming the change. 
Both failed trajectories diverge at this critical step: although the agent reaches the Preferences dialog and interacts with the Theme option, it fails to correctly complete the theme switch.

The first failure terminates prematurely after only four steps. 
The agent correctly navigates to the Theme option but incorrectly assumes that the task has been completed without verifying whether the desired state has been achieved.

\begin{tcolorbox}[
  colback=gray!8,
  colframe=gray!60,
  boxrule=0.4pt,
  arc=1pt,
  left=4pt,
  right=4pt,
  top=3pt,
  bottom=3pt,
  fontupper=\small\ttfamily
]
Step 1: Click the "Edit" menu in the top menu bar.\\
\quad pyautogui.click(124, 72)\\[2pt]

Step 2: Click the "Preferences" option in the Edit menu.\\
\quad pyautogui.click(180, 640)\\[2pt]

Step 3: Click the "Theme" option under the Interface section.\\
\quad pyautogui.click(157, 669)\\[2pt]

Step 4: Terminate (task incorrectly considered completed).\\
\quad DONE
\end{tcolorbox}

In contrast, the second failure continues interacting after the same divergence point. 
Although the agent performs additional actions, these steps do not recover the correct execution path and only introduce ineffective exploration before termination.

\begin{tcolorbox}[
  colback=gray!8,
  colframe=gray!60,
  boxrule=0.4pt,
  arc=1pt,
  left=4pt,
  right=4pt,
  top=3pt,
  bottom=3pt,
  fontupper=\small\ttfamily
]
Step 1-3: Navigate to the Theme option following the correct menu hierarchy.\\[2pt]

Step 4: Click the "Theme" option again in the Preferences dialog.\\
\quad pyautogui.click(167, 667)\\[2pt]

Step 5: Click the "OK" button without successfully applying the Light theme.\\
\quad pyautogui.click(830, 859)\\[2pt]

Step 6: Terminate (task incorrectly considered completed).\\
\quad DONE
\end{tcolorbox}

These examples reveal two common failure modes that are indistinguishable under binary rewards. 
The first corresponds to premature termination (result in early divergence from reference successful trajectory) after an incomplete state transition, while the second corresponds to unnecessary post-divergence interactions that increase trajectory length without improving task completion. 
Therefore, evaluating failed trajectories requires finer-grained preference modeling beyond success or failure labels. 
By incorporating trajectory structure and ineffective actions, LACL-GUI assigns more informative preferences within negative groups, encouraging the policy to avoid both early mistakes and unproductive exploration.

The screenshots below show representative failure cases. 
The first example demonstrates premature termination after the agent reaches the Theme option, while the second illustrates unnecessary post-divergence interactions.

\begin{figure}[H]
\centering
\includegraphics[width=0.92\columnwidth]{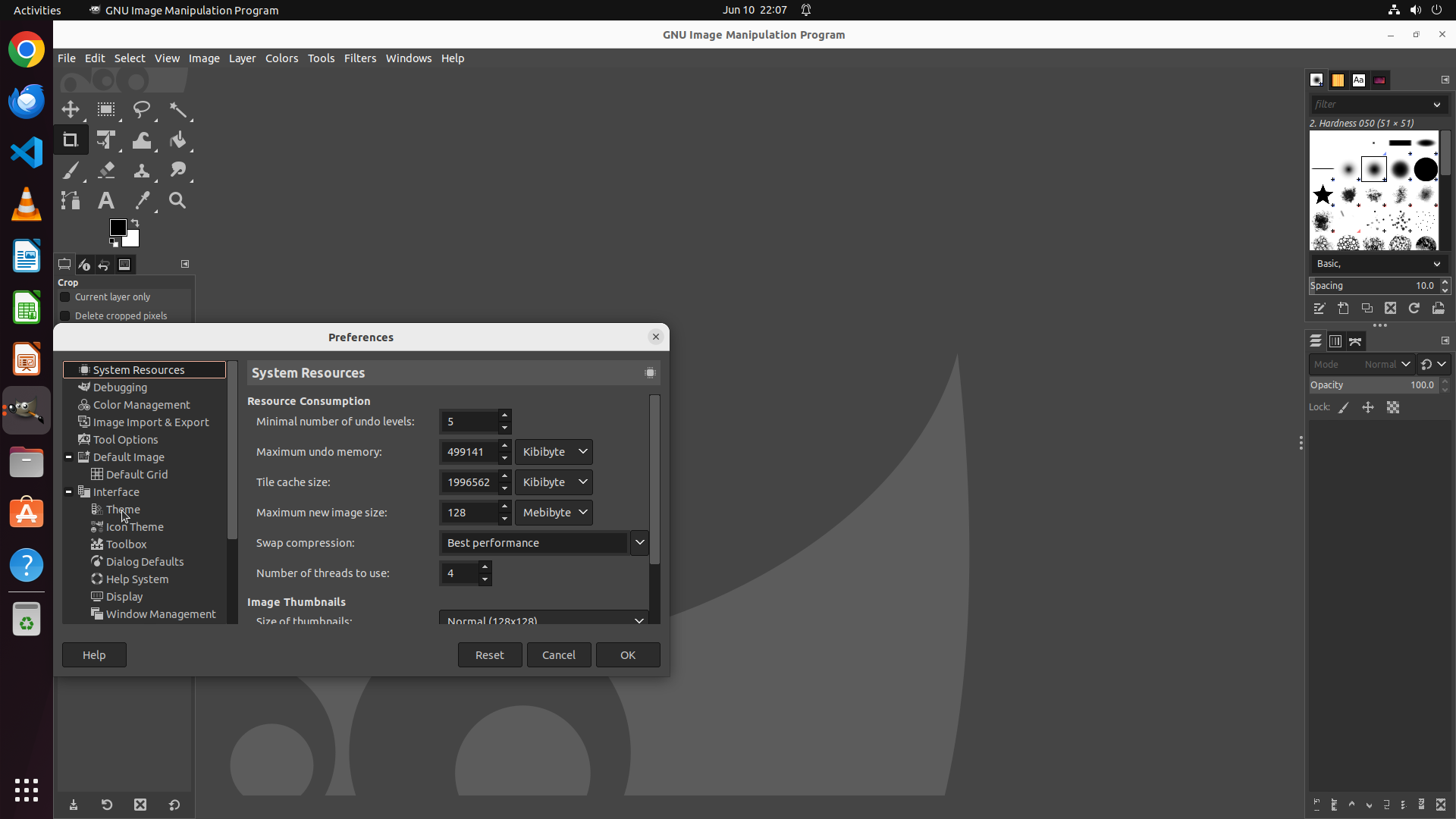}
\caption{Premature termination after an incomplete state transition (Step 4 in 4-step Failed Trajectory).}
\end{figure}

\begin{figure}[H]
\centering
\includegraphics[width=0.92\columnwidth]{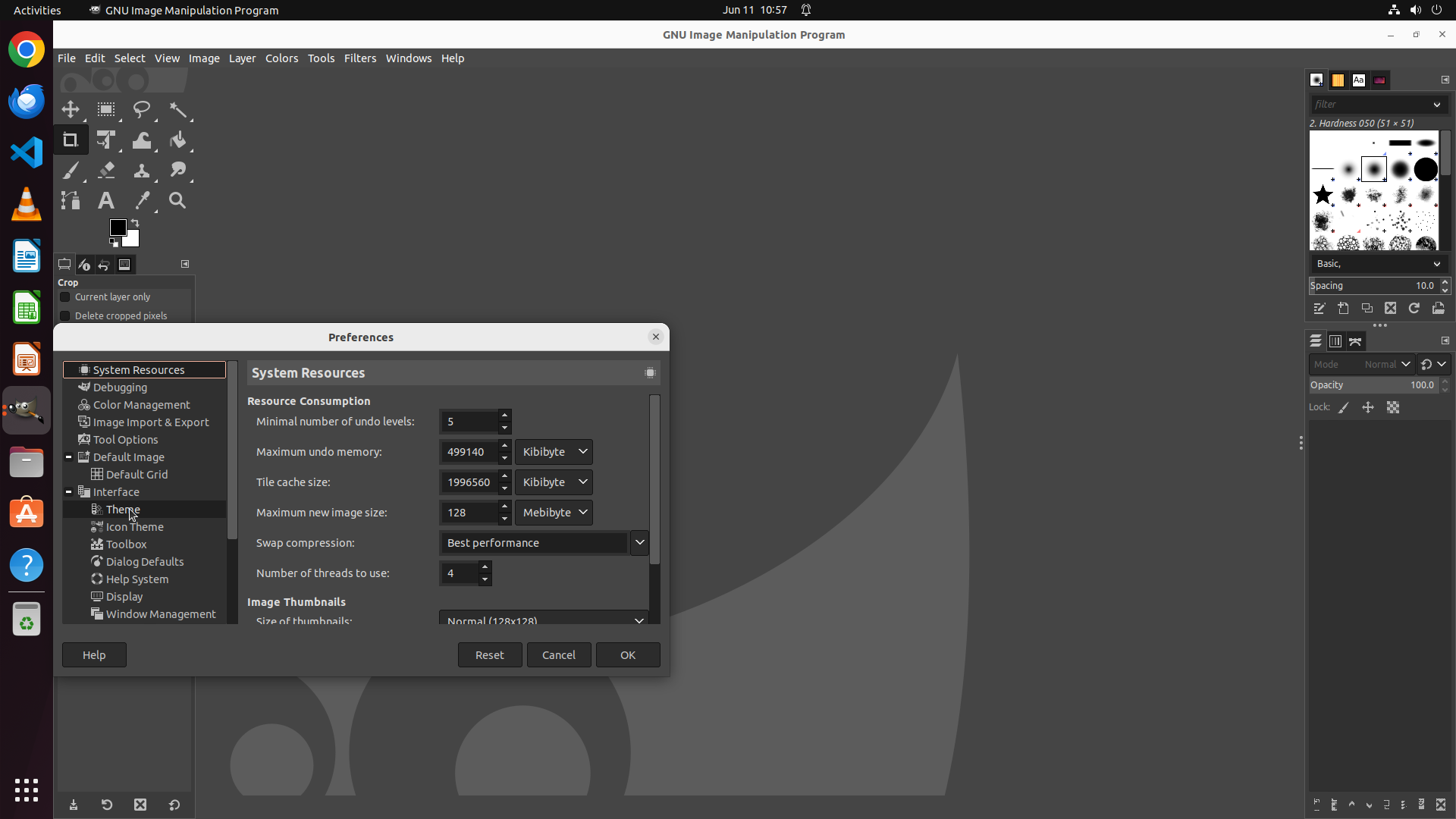}
\caption{Additional ineffective actions after divergence from the successful execution (Step 5 in 6-step Failed Trajectory).}
\end{figure}

These failure cases also demonstrate the motivation of the negative-group modulation in LACL-GUI. 
For each failed trajectory, LACL-GUI first identifies the divergence point from the reference successful trajectory using observation similarity and then estimates the number of erroneous steps after divergence. 
Although both trajectories deviate at the same decision point in this example, they exhibit different error severities: the 4-step trajectory terminates immediately after divergence, whereas the 6-step trajectory continues with additional ineffective interactions. 
Consequently, the latter receives a larger erroneous-step count and stronger suppression through the perturbed logits in Eq.~(9), preventing the policy from reinforcing long unsuccessful exploration. 
Meanwhile, this fine-grained modeling preserves informative failures that remain closer to successful execution, instead of treating all failed trajectories uniformly as in REAL.

\end{document}